\numberwithin{equation}{section}
\newtheorem{theorem}{Theorem}[section]
\newtheorem{lemma}{Lemma}[section]
\newtheorem{proposition}{Proposition}[section]
\theoremstyle{definition}
\newtheorem{definition}{Definition}[section]
\newtheorem{example}{Example}[section]
\newcommand{\bsb}{\boldsymbol}
\newcommand{\bsbX}{{\boldsymbol{X}}}
\newcommand{\bsbx}{{\boldsymbol{x}}}
\newcommand{\bsbY}{{\boldsymbol{Y}}}
\newcommand{\bsby}{{\boldsymbol{y}}}
\newcommand{\bsbb}{{\boldsymbol{\beta}}}
\newcommand{\bsbH}{{\boldsymbol{H}}}
\newcommand{\bsbI}{{\boldsymbol{I}}}
\newcommand{\bsbZ}{{\boldsymbol{Z}}}
\newcommand{\bsbz}{{\boldsymbol{z}}}
\newcommand{\bsbSig}{{\boldsymbol{\Sigma}}}
\newcommand{\bsbxi}{{\boldsymbol{\xi}}}
\newcommand{\bsbD}{{\boldsymbol{D}}}
\newcommand{\bsbU}{{\boldsymbol{U}}}
\newcommand{\bsbV}{{\boldsymbol{V}}}
\newcommand{\bsbA}{{\boldsymbol{A}}}
\newcommand{\bsba}{{\boldsymbol{\alpha}}}
\newcommand{\bsbB}{{\boldsymbol{B}}}
\newcommand{\bsbC}{{\boldsymbol{C}}}
\newcommand{\bsbc}{{\boldsymbol{c}}}
\newcommand{\bsbDelta}{{\boldsymbol{\Delta}}}
\newcommand{\bsbG}{{\boldsymbol{G}}}
\newcommand{\bsbS}{{\boldsymbol{S}}}
\newcommand{\bsbmu}{{\boldsymbol{\mu}}}
\newcommand{\bsbW}{{\boldsymbol{W}}}
\newcommand{\bsbM}{{\boldsymbol{M}}}
\newcommand{\bsbmI}{{\boldsymbol{\mathcal{I}}}}
\newcommand{\rd}{\,\mathrm{d}}
\begin{document}

\title{Reduced Rank Vector Generalized Linear Models for Feature Extraction}
\author{Yiyuan She
\\ Department of Statistics, Florida State University\\
yshe@stat.fsu.edu}
\date{}
\maketitle

\begin{abstract}
Supervised linear feature extraction can be achieved by fitting a reduced rank multivariate model. This paper studies rank penalized and rank constrained vector generalized linear models. From the perspective of thresholding rules, we build a framework for fitting singular value penalized models and use it for feature extraction. Through solving the rank constraint form of the problem, we propose progressive feature space reduction for fast computation in high dimensions with little performance loss. A novel projective cross-validation  is proposed for parameter tuning in such nonconvex setups. Real data applications are given to show the power of the methodology in supervised dimension reduction and feature extraction.
\end{abstract}

\noindent\textsc{AMS 2000 subject classifications}: Primary 62H30,
62J12; secondary 62H12.

\noindent\textsc{Keywords}:
{reduced rank regression, generalized linear models, feature extraction, projective cross-validation, progressive dimension reduction}

\section{Background}
\label{sec:bg}
Recently, high dimensional data analysis attracts a great deal of interest from statisticians. The availability of  a large pool of variables (relative to the sample size) poses  challenges in statistical modeling because in this high-dimensional setup, both estimation accuracy and model interpretability can be seriously hurt. Dimension reduction is a  natural and effective means to reduce the number of unknowns.  One can remove nuisance and/or redundant variables, referred to as  variable/feature selection; alternatively, one can find low dimensional linear or nonlinear projections of the input data, referred to as feature extraction. In this paper, we focus on linear feature extraction for dimension reduction purposes.

The most popular approach for linear feature extraction is perhaps the principle component analysis (PCA). Given $\bsbX \in \mathbb R^{n\times p}$ with $n$ observations and $p$ features, perform the Singular Value Decomposition (SVD) on the data  $\bsbX=\bsbU \bsbD \bsbV^T$. Given any $1\leq r \leq {rank}(\bsbX)$,  denote by $\bsbV_r$ the submatrix of $\bsbV$ consisting of its first $r$ columns. Then $$\bsbZ_r = \bsbX \bsbV_r$$ constructs $r$ new features as linear combinations of the original features.  The  extraction is optimal in the sense that  $\hat\bsbB = \bsbX {\mathcal P}_{\bsbV_r} = \bsbX \bsbV_r \bsbV_r^T$, with   ${\mathcal P}_{\bsbV_r}$ being the projection matrix onto the column space of $\bsbV_r$,   gives the best rank-$r$ approximation to $\bsbX$:
\begin{align*}
\hat \bsbB = \arg\min_{\bsbB: {rank}(\bsbB) \leq r} \|\bsbX - \bsbB\|_F^2,
\end{align*}
where   $\|\cdot\|_F$ is the Frobenius norm. A by-product is that the gram matrix of $\bsbX \bsbV_r$ is diagonal, 
which means all new features are uncorrelated with each other.

On the other hand, PCA is unsupervised. In many statistical learning problems, we want to construct  new features that best predict the responses. Suppose $\bsbY\in \mathbb R^{n\times m}$ is the response matrix, $n$ being the sample size and $m$ being the number of response variables. Supervised feature extraction can be  given by the \emph{reduced rank regression} (RRR) \cite{And51}, with the RRR estimator $\hat \bsbB$    defined by
\begin{align*}
\hat \bsbB = \arg \min_{\bsbB: {rank}(\bsbB) \leq r} \|\bsbY - \bsbX\bsbB\|_F^2.
\end{align*}
Assume $\bsbX$ has full column rank and  define $\bsbH=\bsbX (\bsbX^T \bsbX)^{-1} \bsbX^T$. Then $\hat \bsbB = \hat \bsbB_{{ols}} \bsbV_r \bsbV_r^T$, where $\hat \bsbB_{ols}= (\bsbX^T \bsbX)^{-1} \bsbX^T \bsbY$.   $\bsbV_r$ is formed by  the first $r$ columns of $\bsbV$ from the spectral decomposition  $\bsbY^T \bsbH \bsbY = \bsbV \bsbD \bsbV^T$. See \citeasnoun{ReinVelu}  for more details.  Therefore,
$$\bsbZ_r = \bsbX (\hat \bsbB_{ols} \bsbV_r)$$ constructs $r$  new   features that best approximate  $\bsbY$ in  Frobenius norm , and these new features are, again, uncorrelated.
The RRR framework includes the PCA as a special case, by setting $\bsbY=\bsbX$ \cite{izenbook}.
%
%
%
The above RRR solves a \emph{nonconvex} optimization problem in the classical setup $n>p$. Recently, \citeasnoun{BuneaSheWegkamp} studied the problem under $p>n$ and developed finite-sample theories as well as a computational algorithm.

On the other hand,  the squared error loss may not always  be appropriate. For vector generalized linear models (GLMs), such as   discrete responses   arising in classification problems,  deviance loss is much more reasonable.

Although there is a large body of literature on the RRR---\citeasnoun{Robinson74},  \citeasnoun{Rao79}, and \citeasnoun{brill}, to name a few, to the best of our knowledge, there is very little work beyond the Gaussian model.
\citeasnoun{yee} studied the reduced-rank vector GLM problem and used an iterative approximate estimation by fitting RRR repeatedly. Yet this only provides an approximation solution  to the original problem and there is not guarantee of  converge.
\citeasnoun{copulas} resorted to a continuation 
technique to deal with discrete responses.

This paper aims to  tackle the penalized and constrained
vector GLMs
\begin{align}
\min_{\bsbB} &  -\mbox{log-likelihood}(\bsbB; \bsbY, \bsbX) + \frac{\lambda^2}{2} rank(\bsbB), \mbox{ and} \label{penform}\\
\min_{\bsbB} & -\mbox{log-likelihood}(\bsbB; \bsbY, \bsbX) \quad  \mbox{ s.t. } rank(\bsbB) \leq r. \label{conform}
\end{align}
The imposed reduced rank structure is based on the belief that the  features' relevant directions, {in response to} $\bsbY$,  define  a lower  dimensional subspace in $\mathbb R^p$. The rank of such an estimator determines the number of new features to construct.
These two problems are \emph{not} equivalent to each other due to their nonconvexity. In fact, the rank function is nonconvex and discrete (and thus nondifferentiable), thereby posing a  challenge  in optimization. Our  algorithms boil down to an iterative version, which is not surprising in the GLM setup.

The rest of the paper is organized as follows.
Section 2 starts by studying a  matrix approximation problem, and then builds a framework for fitting  singular-value penalized multivariate GLMs. Supervised feature extraction can be attained for non-Gaussian models not necessarily using  the squared error loss.   The framework covers  a wide family of penalty functions.  A new parameter tuning strategy is proposed.  Section 3 tackles the rank constrained GLM problem and comes up with a feature space reduction technique.
Through this, \eqref{penform} and \eqref{conform} can be combined to achieve better estimation accuracy and computational efficiency.
Section 4 illustrates real applications of the proposed methodology. We conclude in Section 5. All technical details are left to the Appendix.

\section{Penalized Vector GLMs  for Feature Extraction}
\label{sec:penGLM}
In this section, we study the \emph{penalized} form reduced rank GLMs \eqref{penform}.
Our algorithm and analysis apply to  $p>n$ situations and cover a large family of singular-value penalties, including   nuclear norm, Frobenius norm, Schatten $\mathrm p$-penalties ($0<\mathrm p < 1$), and rank penalty. To  achieve such generality,  we start by studying a simpler   matrix approximation problem.
\subsection{Singular-value penalized matrix approximation}
\label{subsec:matapprox}
We consider the problem of matrix approximation  with a singular value penalty
\begin{eqnarray}
\min_{\bsbB} \frac{1}{2}\|\bsbY- \bsbB\|_F^2 + \sum_i P(\sigma_{i}^{(\bsbB)}; \lambda) \label{oriprob_mat}
\end{eqnarray}
where $\sigma_{i}^{(\bsbB)}$ denote the singular values of $\bsbB$.
The choice of the penalty function $P$ is flexible. For example, $P(t;\lambda)=\lambda |t|$ gives a multiple of the sum of singular values corresponding to the trace norm or nuclear norm penalty.   When $P(t;\lambda)= {\lambda^2} 1_{t\neq 0}/2$, we get the rank penalty which is discrete and nonconvex. 
For a general $P$, the closed-form solution to \eqref{oriprob_mat} is not known (to the best of our knowledge).
We address the problem from the standpoint of threshold functions.

\begin{definition}[Threshold function]\label{def:threshold}
A threshold function is a real valued
function  $\Theta(t;\lambda)$ defined for $-\infty<t<\infty$
and $0\le\lambda<\infty$ such that
\begin{enumerate}
\item $\Theta(-t;\lambda)= -\Theta(t;\lambda)$,
\item $\Theta(t;\lambda)\le \Theta(t';\lambda)$ for $t\le t'$, 
\item $\lim_{t\to\infty} \Theta(t;\lambda)=\infty$,\quad and
\item $0\le \Theta(t;\lambda)\le t$\ for\ $0\le t<\infty$.
\end{enumerate}
\end{definition}
\emph{Remarks. }
(i)
A vector version of $\Theta$ (still denoted by $\Theta$) is defined componentwise if
either $t$ or $\lambda$ is replaced by a vector.
(ii)
There may be some ambiguity in defining a threshold function. For example,
the hard-thresholding can be  defined as $\Theta_H(t; \lambda)=t 1_{|t|>\lambda}$ or $\Theta_H(t; \lambda)=t 1_{|t|\geq \lambda}$.  Fortunately, commonly used thresholding rules have at most finitely many discontinuity points and such discontinuities rarely occur in real data. When applying $\Theta$ to a quantity (say  $t$), we always make the implicit assumption that $\Theta$ is continuous at $t$.
(iii)
By definition,   $\Theta^{-1}(u;\lambda)\triangleq \sup\{t:\Theta(t;\lambda)\leq u\}, \forall u > 0$ must be monotonically increasing  on $(0, \infty)$ and  bounded between the identity line and $u=0$;  its derivative is defined almost everywhere  on $(0, \infty)$.
We assume that $$\rd \Theta^{-1}(u;\lambda)/\rd u\geq 1-{\mathcal L}_\Theta \quad \mbox{a.e. on } (0, \infty)$$ for some constant ${\mathcal L}_\Theta\in[0, 1]$ independent of $\lambda$. (In fact, for all \emph{convex} penalties constructed through \eqref{constrP}, ${\mathcal L}_{\Theta}$ can be set to be $0$.)

Next we introduce the matrix thresholding.
\begin{definition}[Matrix threshold function]\label{def:thresholdmat}
Given any  threshold function $\Theta(\cdot; \lambda)$, its matrix version $\Theta^{\sigma}$ is defined as follows
\begin{eqnarray}
\Theta^{\sigma}(\bsbB;\lambda)\triangleq \bsbU\mbox{diag}\{\Theta(\sigma_{i}^{(\bsbB)}; \lambda)\} \bsbV^T, \quad \forall \bsbB \in \mathbb R^{n\times m}\label{matthetadef}
\end{eqnarray}
where $\bsbU$, $\bsbV$, and $\mbox{diag}\{\sigma_{i}^{(\bsbB)}\}$ are obtained from the SVD of $\bsbB$: $\bsbB=\bsbU^T \mbox{diag}\{\sigma_{i}^{(\bsbB)}\} \bsbV$.
\end{definition}
Note that $\Theta(0; \lambda)=0$ by definition, and $\Theta^\sigma(\bsbB;\lambda)$ is not affected by the ambiguity of the SVD form. 


\begin{proposition}
\label{uniqsol-mat}
Given an arbitrary thresholding rule $\Theta$, let $P$ be any function satisfying
\begin{align}
P(\theta;\lambda)-P(0;\lambda)= \int_0^{|\theta|} (\sup\{s:\Theta(s;\lambda_k)\leq u\} - u) \rd u
 + q(\theta;\lambda), \label{constrP}
\end{align}
where $q(\cdot;\lambda)$ is nonnegative and $q(\Theta(t;\lambda); \lambda)=0$ for all $t\in \mathbb R$.  Then, the singular-value penalized minimization
\begin{eqnarray}
\min_\bsbB  F(\bsbB) = \|\bsbY-\bsbB\|_F^2/2 + \sum P(\sigma_{i}^{(\bsbB)}; \lambda)  \label{oriprob-nodesign}
\end{eqnarray}
has a unique optimal solution $\hat \bsbB=\Theta^\sigma(\bsbY;\lambda)$ for every $\bsbY$, provided $\Theta(\cdot;\lambda)$ is continuous at any singular value of $\bsbY$.
\end{proposition}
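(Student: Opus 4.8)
\emph{Proof plan.} The strategy is to collapse \eqref{oriprob-nodesign} onto a separable problem in the singular values, solve the resulting one–dimensional problem by means of the $P$--$\Theta$ coupling \eqref{constrP}, and then lift the scalar answer back to matrices via the equality case of von Neumann's trace inequality. For the reduction, expand $\|\bsbY-\bsbB\|_F^2=\|\bsbY\|_F^2-2\langle\bsbY,\bsbB\rangle+\|\bsbB\|_F^2$ with $\langle\bsbY,\bsbB\rangle=\mathrm{tr}(\bsbY^T\bsbB)$, and invoke von Neumann's trace inequality $\langle\bsbY,\bsbB\rangle\le\sum_i\sigma_i^{(\bsbY)}\sigma_i^{(\bsbB)}$ (singular values listed in nonincreasing order), with equality precisely when $\bsbY$ and $\bsbB$ admit a common SVD, i.e.\ $\bsbY=\bsbU\,\mathrm{diag}\{\sigma_i^{(\bsbY)}\}\bsbV^T$ and $\bsbB=\bsbU\,\mathrm{diag}\{\sigma_i^{(\bsbB)}\}\bsbV^T$ for shared orthogonal $\bsbU,\bsbV$. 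This gives
\[
F(\bsbB)\ \ge\ \tfrac12\sum_i(\sigma_i^{(\bsbY)}-\sigma_i^{(\bsbB)})^2+\sum_i P(\sigma_i^{(\bsbB)};\lambda)\ =\ \sum_i\phi(\sigma_i^{(\bsbB)};\sigma_i^{(\bsbY)}),
\]
with $\phi(\theta;\sigma):=\tfrac12(\sigma-\theta)^2+P(\theta;\lambda)$, the bound being attained exactly on the $\bsbB$ co-diagonalizable with $\bsbY$. It then remains to minimize each $\phi(\cdot\,;\sigma_i^{(\bsbY)})$ over $[0,\infty)$, uniquely.

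For the scalar problem, fix $\sigma=\sigma_i^{(\bsbY)}\ge 0$ and substitute \eqref{constrP} to write $\phi(\theta;\sigma)=P(0;\lambda)+g(\theta)+q(\theta;\lambda)$, where $g(\theta)=\tfrac12(\sigma-\theta)^2+\int_0^\theta(\Theta^{-1}(u;\lambda)-u)\rd u$ and $\Theta^{-1}(u;\lambda)=\sup\{s:\Theta(s;\lambda)\le u\}$. The decisive observation is that the $-u$ in the integrand cancels against the quadratic: $g'(\theta)=\Theta^{-1}(\theta;\lambda)-\sigma$ a.e., which is nondecreasing because $\Theta^{-1}(\cdot;\lambda)$ is (Remark (iii)), so $g$ is convex and coercive on $[0,\infty)$. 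Monotonicity of $\Theta$ gives $g'\le 0$ on $(0,\Theta(\sigma;\lambda))$ and $g'\ge 0$ on $(\Theta(\sigma;\lambda),\infty)$, hence $\Theta(\sigma;\lambda)\in\arg\min g$; since $q\ge 0$ and $q(\Theta(\sigma;\lambda);\lambda)=0$, it also minimizes $\phi$, and moreover $\arg\min\phi\subseteq\arg\min g$. If $\arg\min g$ (a closed interval, by convexity and coercivity) were nondegenerate, then $g'\equiv 0$ on its interior, i.e.\ $\Theta^{-1}(\cdot;\lambda)\equiv\sigma$ on a whole interval, which forces $\Theta(\cdot;\lambda)$ to have a jump at $\sigma$ --- excluded by the hypothesis that $\Theta(\cdot;\lambda)$ is continuous at every singular value of $\bsbY$. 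Therefore $\arg\min\phi=\arg\min g=\{\Theta(\sigma;\lambda)\}$.

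Combining the two steps, $F(\bsbB)\ge\sum_i\phi(\Theta(\sigma_i^{(\bsbY)};\lambda);\sigma_i^{(\bsbY)})$ for every $\bsbB$, with equality iff $\bsbB$ shares an SVD with $\bsbY$ and $\sigma_i^{(\bsbB)}=\Theta(\sigma_i^{(\bsbY)};\lambda)$ for all $i$. The matrix $\Theta^\sigma(\bsbY;\lambda)$ of Definition~\ref{def:thresholdmat} meets both conditions, so it is optimal; conversely, any minimizer $\hat\bsbB$ must equal $\bsbU\,\mathrm{diag}\{\Theta(\sigma_i^{(\bsbY)};\lambda)\}\bsbV^T$ for some SVD $\bsbY=\bsbU\,\mathrm{diag}\{\sigma_i^{(\bsbY)}\}\bsbV^T$. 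Because $\Theta$ sends equal singular values to equal values, this product is independent of the SVD chosen (the remark after Definition~\ref{def:thresholdmat}) and equals $\Theta^\sigma(\bsbY;\lambda)$, which yields both existence and uniqueness.

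The step I expect to be most delicate is the uniqueness. In the scalar part one must handle, under only pointwise continuity of $\Theta(\cdot;\lambda)$ at $\sigma$, the fact that common thresholds have flat stretches (soft-thresholding) or jumps (hard-thresholding): it is exactly the interplay of $q\ge 0$ with that continuity that removes the spurious ties in $\arg\min\phi$. In the matrix part, one must check that the equality condition in von Neumann's inequality still pins $\hat\bsbB$ down when $\bsbY$ has repeated singular values, so that the SVD ambiguity is genuinely harmless --- this is where the well-definedness of $\Theta^\sigma$ is used. The rest is routine calculus with monotone functions.
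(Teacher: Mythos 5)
Your argument is correct, and its first half --- von Neumann's trace inequality to reduce \eqref{oriprob-nodesign} to the separable problem $\min \sum (\sigma_i^{(\bsbY)}-d_i)^2/2+\sum P(d_i;\lambda)$ in the singular values --- is exactly the paper's route. You diverge in two places. First, you prove the scalar lemma directly: writing the objective as $g+q$ with $g'(\theta)=\Theta^{-1}(\theta;\lambda)-\sigma$ a.e., so that $g$ is convex and $\Theta(\sigma;\lambda)$ is its minimizer, with continuity of $\Theta$ at $\sigma$ ruling out a flat stretch of minimizers; the paper simply cites Lemma~1 of \citeasnoun{SheGLMTISP} for this, so your version is self-contained but proves nothing new. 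Second, and more substantively, you obtain uniqueness of the matrix minimizer from the \emph{equality case} of von Neumann's inequality (equality forces a simultaneous SVD of $\bsbY$ and $\bsbB$), whereas the paper only ever uses the inequality itself (its Lemma~\ref{vonNeu} states no equality characterization) and instead derives uniqueness from the quantitative perturbation bound of Proposition~\ref{unifuncopt-mat}, i.e.\ $Q(\bsbB)-Q(\hat\bsbB)\geq \|\bsbB-\hat\bsbB\|_F^2/2$ once the optimal singular values are pinned down. Your route is shorter and more transparent, but it leans on the (standard, yet nontrivial) simultaneous-SVD characterization of equality, which you should either prove or cite precisely --- including the degenerate cases of repeated and zero singular values that you rightly flag; the paper's route costs an extra proposition but that proposition is needed anyway to establish \eqref{gbound} in the convergence proof of Theorem~\ref{conv_mat}, so it comes for free there. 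Your remaining bookkeeping (well-definedness of $\Theta^\sigma$ under SVD ambiguity, $\arg\min\phi\subseteq\arg\min g$ via $q\geq 0$ and $q(\Theta(t;\lambda);\lambda)=0$) matches the paper's remarks and is sound; you might add one line observing that the componentwise minimizers $\Theta(\sigma_i^{(\bsbY)};\lambda)$ are automatically nonincreasing, so the implicit ordering constraint on singular values is inactive.
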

See Appendix \ref{approofMatApp} for its proof.

The function $q$ is often just zero, but can be nonzero in certain cases. In fact, we can use  a nontrivial $q$ to attain the exact  rank penalty; see \eqref{defofq}.
The proposition implies multiple (infinitely many, as a matter of fact) penalties can result in the same solution, which justifies our thresholding launching point  (rather than a penalty one). Some examples of the penalty $P$ and the coupled  $\Theta$ are listed in Table \ref{tab:pthetaexs}.

\begin{table}[ht]
\centering
\caption{\small{Some basic singular-value penalties and their coupled thresholding functions.} }\label{tab:pthetaexs}

\small{
\begin{tabular}{l  c c c c}
\hline

\hline
 & Nuclear norm & Frobenius  & Rank & Schatten-$\mathrm p$, $\mathrm p \in (0,1)$   \\
\hline
Penalty function & $\lambda\|\bsbB\|_*=\lambda \sum \sigma_i^{(\bsbB)}$ & $\lambda\|\bsbB\|_F^2$ & $\frac{\lambda^2}{2} rank(B)$ & $\lambda\sum  (\sigma_i^{(\bsbB)}  )^{\mathrm p}$  \\
{Thresholding rule} &
$(t - \mbox{sgn}(t)\lambda) 1_{|t|>\lambda}$ & $\frac{t}{1+\lambda}$ & $t 1_{|t|>\lambda}$ &   Ex 2.7 in  \citeasnoun{SheGLMTISP} \\
& (soft) & (ridge) & (hard) &   \\
\hline

\hline
\end{tabular}
}
\end{table}						

We point out two special cases of Proposition \ref{uniqsol-mat} as follows.
\paragraph*{A fusion between nuclear norm and Frobenius norm}
Define  a continuous thresholding rule
\begin{eqnarray}
\Theta_B(t; \lambda, M) = \begin{cases}0, & \mbox{ if } |t| \leq \lambda\\
t-\lambda\mbox{sgn}(t), & \mbox{ if }\lambda<|t|<\lambda+M\\
\frac{t}{1+\frac{\lambda}{M}}, & \mbox{ if } |t| > \lambda + M.\end{cases}
\end{eqnarray}
When $M\rightarrow \infty$, $\Theta_B$ becomes the soft-thresholding. When $M=0$, $\Theta_B$ reduces to the ridge thresholding. The penalty constructed from \eqref{constrP} is given by
\begin{eqnarray*}
P(\theta;\lambda, M) = \begin{cases} \lambda |\theta|, & \mbox{ if } |\theta|\leq M\\ \lambda \frac{\theta^2 + M^2}{2M}, & \mbox{ if } |\theta|>M, \end{cases}
\end{eqnarray*}
which is exactly the  `\textbf{Berhu}' penalty \cite{Owen} whose composition reverses that of \textbf{Huber}'s robust loss function.
The Berhu penalty on the singular values provides a \emph{convex} fusion of  the nuclear norm penalty  and the Frobenius norm (squared) penalty  in the problem of \eqref{oriprob-nodesign}. Unlike the elastic net \cite{ZouHas}, this   fusion is nonlinear and fully preserves the nondifferentiable behavior (around  zero) of the nuclear norm.

\paragraph*{A fusion between rank and Frobenius norm}
A direct thresholding rule that fuses the hard-thresholding and the ridge-thresholding is the hard-ridge thresholding \cite{SheTISP}
\begin{eqnarray}
\Theta_{HR}(t;\lambda,\eta)=\begin{cases} 0, & \mbox{ if } |t|< \lambda\\ \frac{t}{1+\eta}, & \mbox{ if }  |t|\geq\lambda. \end{cases} \label{hybridthfunc}
\end{eqnarray}
Setting $q\equiv 0$ in Proposition \ref{uniqsol-mat}, we obtain one associated penalty
\begin{eqnarray*}
P(\theta;\lambda, \eta)=\begin{cases} -\frac{1}{2} \theta^2 + \lambda |\theta|, & \mbox{ if } |\theta| < \frac{\lambda}{1+\eta}\\ \frac{1}{2} \eta \theta^2 +\frac{1}{2}\frac{\lambda^2}{1+\eta}, & \mbox{ if } |\theta| \geq \frac{\lambda}{1+\eta}. \end{cases} \label{hybridpenfunc}
\end{eqnarray*}
Interestingly, noticing that $\Theta_{HR}$ is discontinuous at $\lambda$, we can choose
\begin{align}
q(\theta;\lambda, \eta)=\begin{cases} \frac{(1+\eta)(\lambda-|\theta|)^2}{2} , & \mbox{ if } 0 < |\theta| < \lambda\\
0,  & \mbox{ if }  \theta=0 \mbox{ or } |\theta| > \lambda  \end{cases} \label{defofq}
\end{align}
which leads to $P(\theta)=\frac{1}{2} \eta \theta^2  + \frac{1}{2}\frac{\lambda^2}{1+\eta} 1_{\theta\neq0}$. Therefore, $\Theta_{HR}(\cdot; \lambda, \eta)$ can deal with the following rank-Frobenius penalty  in \eqref{oriprob-nodesign}
\begin{align}
\frac{1}{2} \eta \|\bsbB\|_F^2  + \frac{1}{2}\frac{\lambda^2}{1+\eta} \mbox{rank}(\bsbB). \label{rankfropen}
\end{align}
This penalty may be of interest in statistical learning tasks that have joint concerns of accuracy and parsimony: the rank portion enforces high rank deficiency, while the ridge (Frobenius) portion
shrinks $\bsbB$ to compensate for large noise and decorrelates the input variables  in large-$p$ applications.

At the end of this subsection, we present a perturbation result which will be used to establish the main result in the next subsection.
\begin{proposition}
\label{unifuncopt-mat}
Given $\bsbY\in {\mathbb R}^{n\times m}$, let $Q(\bsbB)=  \|\bsbY-\bsbB\|_F^2/2 + \sum P_{\Theta}(\sigma_{i}^{(\bsbB)}; \lambda)$, where $P_{\Theta}$ is the penalty obtained from \eqref{constrP}. Denote by $\hat \bsbB$  the  minimizer of $Q(\bsbB)$. 
Then for any matrix $\bsbDelta\in {\mathbb R}^{n\times m}$
\begin{eqnarray*}
Q(\hat \bsbB+\bsbDelta)-Q(\hat \bsbB) \geq \frac{C_1}{2} \|\bsbDelta\|_F^2,
\end{eqnarray*}
where  $C_1 =  1-{\mathcal L}_\Theta \geq 0$.
\end{proposition}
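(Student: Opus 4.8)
The plan is to exhibit $Q$ as the sum of a $C_1$-strongly convex quadratic and a \emph{convex} spectral penalty, so that the claimed inequality is nothing but the strong-convexity inequality evaluated at the global minimizer $\hat\bsbB=\Theta^\sigma(\bsbY;\lambda)$ identified in Proposition \ref{uniqsol-mat}. I would first dispose of the degenerate case $\mathcal L_\Theta=1$, i.e. $C_1=0$: there the assertion reads only $Q(\hat\bsbB+\bsbDelta)\ge Q(\hat\bsbB)$, which is immediate from the \emph{global} optimality of $\hat\bsbB$ in Proposition \ref{uniqsol-mat}. Henceforth assume $\mathcal L_\Theta<1$.

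With $\mathcal L_\Theta<1$, the Remark (iii) hypothesis $\rd\Theta^{-1}(u;\lambda)/\rd u\ge 1-\mathcal L_\Theta>0$ a.e. makes $\Theta^{-1}(\cdot;\lambda)$ strictly increasing, hence $\Theta(\cdot;\lambda)$ has no jumps and is continuous, and so maps $[0,\infty)$ onto $[0,\infty)$; the side condition $q(\Theta(t);\lambda)=0$ in \eqref{constrP} then forces $q\equiv 0$, whence $P'(\theta;\lambda)=\Theta^{-1}(\theta;\lambda)-\theta$ for $\theta>0$. Set $\psi(\theta):=P(|\theta|;\lambda)+\tfrac{\mathcal L_\Theta}{2}\theta^2$. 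Then for $\theta>0$ one has $\psi'(\theta)=\Theta^{-1}(\theta;\lambda)-\theta+\mathcal L_\Theta\theta=\Theta^{-1}(\theta;\lambda)-C_1\theta\ge(1-C_1)\theta=\mathcal L_\Theta\theta\ge0$, using $\Theta^{-1}(\theta;\lambda)\ge\theta$ (Definition \ref{def:threshold}(4)), and $\psi''(\theta)=\rd\Theta^{-1}(\theta;\lambda)/\rd\theta-1+\mathcal L_\Theta\ge0$; combined with continuity of $P$ at the origin (where $\Theta^{-1}(0^+;\lambda)<\infty$), this shows $\psi$ is convex and nondecreasing on $[0,\infty)$.

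A nondecreasing convex function of the singular values is a convex function of the matrix --- which one sees from the convexity of the Ky Fan $k$-norms together with a representation $\psi(t)=\psi(0)+\int_0^\infty(t-s)_+\,\rd\mu(s)$ for a nonnegative measure $\mu$, or directly from Lewis's characterization of unitarily invariant convex functions --- so $g(\bsbB):=\sum_i\psi(\sigma_i^{(\bsbB)})=\sum_i P(\sigma_i^{(\bsbB)};\lambda)+\tfrac{\mathcal L_\Theta}{2}\|\bsbB\|_F^2$ is convex on $\mathbb R^{n\times m}$. Expanding $\tfrac12\|\bsbY-\bsbB\|_F^2=\tfrac12\|\bsbY\|_F^2-\langle\bsbB,\bsbY\rangle+\tfrac12\|\bsbB\|_F^2$ (Frobenius inner product) and substituting $\sum_i P(\sigma_i^{(\bsbB)};\lambda)=g(\bsbB)-\tfrac{\mathcal L_\Theta}{2}\|\bsbB\|_F^2$ gives $Q(\bsbB)=\big(\tfrac{C_1}{2}\|\bsbB\|_F^2-\langle\bsbB,\bsbY\rangle\big)+g(\bsbB)+\tfrac12\|\bsbY\|_F^2$, the sum of a $C_1$-strongly convex quadratic and the convex function $g$ (plus a constant), hence $Q$ is $C_1$-strongly convex. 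Since $\hat\bsbB$ globally minimizes $Q$ (Proposition \ref{uniqsol-mat}), $0\in\partial Q(\hat\bsbB)$, and the strong-convexity inequality yields $Q(\hat\bsbB+\bsbDelta)-Q(\hat\bsbB)\ge\tfrac{C_1}{2}\|\bsbDelta\|_F^2$.

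The only genuinely non-mechanical step is the passage from convexity-plus-monotonicity of the scalar profile $\psi$ to convexity of the spectral function $\bsbB\mapsto\sum_i\psi(\sigma_i^{(\bsbB)})$; the weak-convexity estimate for $\psi$, the algebraic rearrangement of $Q$, and the invocation of global optimality are all routine. A secondary technical point is the bookkeeping around vanishing singular values and the behavior of $P$ at $0$, together with the observation --- needed to license the first reduction --- that a nonzero $q$ (as for the rank and hard-ridge penalties) can occur only when $\Theta$ has a jump, which in turn forces $\mathcal L_\Theta=1$.
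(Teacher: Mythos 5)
Your argument is correct, but it follows a genuinely different route from the paper's. The paper works directly with the SVDs of $\bsbY$, $\hat\bsbB$, and $\hat\bsbB+\bsbDelta$, applies von Neumann's trace inequality (Lemma \ref{vonNeu}) to reduce the matrix comparison to scalar comparisons of $(d_{0,i}-d_i)^2/2+P_\Theta(d_i;\lambda)$ against $(d_{0,i}-\hat d_i)^2/2+P_\Theta(\hat d_i;\lambda)$, and then invokes the scalar perturbation bound (Lemma 2 of \citeasnoun{SheGLMTISP}); this treats continuous and discontinuous $\Theta$, and zero and nonzero $q$, in one uniform computation. You instead split on $\mathcal L_\Theta$: the case $\mathcal L_\Theta=1$ is dispatched by global optimality alone, and for $\mathcal L_\Theta<1$ you correctly note that $\Theta$ must be continuous and surjective (a jump in $\Theta$ makes $\Theta^{-1}$ locally constant, forcing $\mathcal L_\Theta=1$), hence $q\equiv 0$, and you then exhibit $Q$ as a $C_1$-strongly convex quadratic plus the convex spectral function $\sum_i\psi(\sigma_i^{(\bsbB)})$ with $\psi(\theta)=P(|\theta|;\lambda)+\tfrac{\mathcal L_\Theta}{2}\theta^2$. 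One step deserves a line more than you give it: since $\Theta^{-1}$ need not be absolutely continuous, the a.e.\ bound on its derivative yields convexity of $\psi$ only after observing that the jump and singular parts of the monotone function $\Theta^{-1}$ are nonnegative, so that $\Theta^{-1}(u_2)-\Theta^{-1}(u_1)\ge\int_{u_1}^{u_2}(\Theta^{-1})'\rd u\ge(1-\mathcal L_\Theta)(u_2-u_1)$ and the integrand defining $\psi$ is genuinely nondecreasing; with that, the passage to convexity of $\bsbB\mapsto\sum_i\psi(\sigma_i^{(\bsbB)})$ via Lewis's theorem or the Ky Fan representation is standard. What each approach buys: yours makes transparent that the proposition is vacuous beyond optimality when $C_1=0$ and is textbook strong convexity otherwise, and it delivers existence and uniqueness of $\hat\bsbB$ for free when $C_1>0$. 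The paper's trace-inequality argument, though less conceptual, also yields the sharper by-product $Q(\bsbB)-Q(\hat\bsbB)\ge\|\bsbB-\hat\bsbB\|_F^2/2$ for any competing minimizer $\bsbB$, which Appendix \ref{approofpert} uses to prove uniqueness in Proposition \ref{uniqsol-mat} even for discontinuous rules such as hard thresholding, where $C_1=0$; your convexity route cannot recover that.
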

See Appendix \ref{approofpert} for its proof.

\subsection{Singular-value penalized vector GLMs}

In this subsection, we generalize the  results obtained for matrix approximation to vector GLMs.


Let $\bsbY=[\bsby_1, \cdots, \bsby_m]\in {\mathbb R}^{n\times m}$ be the response matrix with $m$ response variables and $n$ samples for each. Assume $y_{i,k}$ are independent and each follows a distribution in the natural exponential family $f(y_{i,k}; \theta_{i,k}) = \exp (y_{i,k}\theta_{i,k}-b(\theta_{i,k})+c(y_{i,k}))$, where $\theta_{i,k}$ is the natural parameter. Let $L_{i,k}=\log f(y_{i,k}, \theta_{i,k})$, $L=\sum_{k} \sum_{i} L_{i,k}$.
The {canonical link} function $g=(b')^{-1}$  is applied throughout the paper.
Let the model matrix  and the corresponding coefficient matrix be
\begin{align}
\begin{split}
\bsbX&=\left[\bsbx_1, \cdots, \bsbx_n\right]^T=\left[\tilde \bsbx_0, \tilde \bsbx_1, \cdots, \tilde \bsbx_p\right]\in {\mathbb R}^{n\times (p+1)} \mbox{ and }\\
 \bsbB&=[\bsbb_1, \cdots, \bsbb_m]=[\tilde \bsbb_0, \tilde \bsbb_1, \cdots, \tilde \bsbb_p]^T\in {\mathbb R}^{(p+1)\times m}, \end{split} \label{predsplit}
\end{align}
respectively. If $\tilde \bsbx_0 = \bsb{1}$,  $\tilde \bsbb_0$ represents the intercept vector.   For convenience, we use $\bsbB^{\circ}= [\tilde \bsbb_1, \cdots, \tilde \bsbb_p]^T$ to denote the submatrix of $\bsbB$  obtained by deleting  the first row $\tilde \bsbb_0^T$, and use $\bsbX^\circ$ to denote the submatrix of $\bsbX$ obtained by deleting the first column $\tilde \bsbx_0$.  
Given  any GLM with coefficients  $\bsbb$, we introduce
\begin{align*}
\mu(\bsbb)&\triangleq[g^{-1}(\bsbx_i^T\bsbb)]_{n\times 1} \mbox{  and  }
\bsbmI(\bsbb)&\triangleq  \bsbX^T\bsbW  \bsbX=  \bsbX^T\mbox{diag}\left\{b''(\bsbx_{i}^T\bsbb)\right\}_{i=1}^n \bsbX
\end{align*}
to denote the mean vector and the 
information matrix 
at $\bsbb$. For the $m$-response vector GLM, the mean matrix $\bsbmu(\bsbB)=\left[\mu_{i,k}\right]_{n\times m}$ is defined as $[\bsbmu(\bsbb_1), \cdots, \bsbmu(\bsbb_m)]$. 

\paragraph{Remarks.} (i)
Having $\tilde \bsbx_0$ and $\tilde \bsbb_0$ is necessary. For non-Gaussian GLMs,  one cannot center the response variables  because this may violate the distributional assumption. (ii)
For clarity, the above setup does not include any dispersion parameter. But all discussions in this subsection can be trivially extended to the exponential dispersion family $f(y_{i,k}; \theta_{i,k}, \phi) = \exp \{(y_{i,k}\theta_{i,k}-b(\theta_{i,k}))/a(\phi)+c(y_{i,k}, \phi)\}$ which  covers the vector Gaussian regression.

Our goal is to minimize \eqref{penform} or more generally, the following objective function
\begin{eqnarray}
F(\bsbB) &\triangleq& -\sum_{k=1}^m \sum_{i=1}^n L_{i,k}(\bsbb_k;  \bsbx_i, y_{ik}) + \sum_{s=1}^{p\wedge m} P(\sigma_{s}^{(\bsbB^{\circ})}; \lambda) \label{oriprob-glm}
\end{eqnarray}
for a large family of penalty functions  (possibly nonconvex).  The penalty is \emph{not} imposed on  $\tilde \bsbb_0$.

We construct the following sequence of iterates for solving the problem:
given $\bsbB^{(j)}$, perform the update
\begin{align}
\begin{cases}
\bsbB^{\circ (j+1)} & =  \Theta^{\sigma}(\bsbB^{\circ (j)}+ \bsbX^{\circ T} \bsbY - \bsbX^{\circ T} \bsbmu(\bsbB^{(j)}); {\lambda}), \\
\tilde  \bsbb_0^{(j+1)} & =  \tilde\bsbb_0^{(j)}  +   (\bsbY - \bsbmu(\bsbB^{(j)}))^T \tilde \bsbx_{0}.
\end{cases}
\label{tisp-mat}
\end{align}

\begin{theorem}
\label{conv_mat}
Given an arbitrary thresholding rule $\Theta$, let  $P(\cdot)$   be any function satisfying \eqref{constrP}. Starting with any $\bsbB^{(0)}\in {\mathbb R}^{(p+1)\times m}$, run \eqref{tisp-mat} to obtain a sequence   $\{\bsbB^{(j)}\}$. Denote by $A_k$ the set of $\{t\bsbb_k^{(j)}+(1-t)\bsbb_k^{(j+1)}: t \in (0, 1), j=1, 2, \cdots\}$, $1\leq k \leq K$, and define $$\rho=\max_{1\leq k \leq m} \sup_{\bsbxi_k\in A_k}  \|\bsbmI(\bsbxi_k)\|_2.$$
Suppose $\rho <  {2-{\mathcal L}_\Theta}$. Then
$F(\bsbB^{(j)})$ is decreasing and satisfies
\begin{eqnarray}
F(\bsbB^{(j)})- F(\bsbB^{(j+1)})\geq C \|\bsbB^{(j)}-\bsbB^{(j+1)}\|_F^2/2, \quad j=1,2,\cdots \label{asympreg_mat}
\end{eqnarray}
where $C= 2-{\mathcal L}_\Theta - \rho$.
Any limit point $\bsbB^{*}$ of the sequence $\bsbB^{(j)}$, referred to as a $\Theta^{\sigma}$-estimator,  is
a solution to the following equation:
\begin{eqnarray}
\begin{cases}
\bsbB^{\circ}  = \Theta^{\sigma}(\bsbB^{\circ} + \bsbX^{\circ T} \bsbY - \bsbX^{\circ T} \bsbmu(\bsbB); {\lambda})\\
\bsb{0}  =(\bsbY - \bsbmu(\bsbB))^T \tilde  \bsbx_0,
\end{cases}
\label{theta-mat}
\end{eqnarray}
under the assumption that $\Theta$ is continuous at all singular values of  $\bsbB^{\circ *} + \bsbX^{\circ T} \bsbY - \bsbX^{\circ T} \bsbmu(\bsbB^*)$.
\end{theorem}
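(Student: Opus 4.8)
The plan is to treat the recursion \eqref{tisp-mat} as a majorization--minimization scheme: each iteration is the exact minimizer of a quadratic-plus-penalty surrogate of $F$, and combining the optimality of that minimizer (quantified by Proposition \ref{unifuncopt-mat}) with a second-order bound on the log-likelihood (controlled by $\rho$) will produce the sufficient-decrease inequality \eqref{asympreg_mat}; the fixed-point characterization \eqref{theta-mat} then follows by a standard continuity argument.

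First I would rewrite the update in proximal form. Put $\nabla^{(j)} := \bsbX^{T}(\bsbmu(\bsbB^{(j)})-\bsbY)\in{\mathbb R}^{(p+1)\times m}$, which under the canonical link $g^{-1}=b'$ is exactly the gradient of $-\sum_{k}\sum_i L_{i,k}$ at $\bsbB^{(j)}$; let $\nabla^{\circ(j)}$ be $\nabla^{(j)}$ with its first row deleted. Then \eqref{tisp-mat} says $\bsbB^{(j+1)}$ equals $\bsbB^{(j)}-\nabla^{(j)}$ in its first row and $\Theta^{\sigma}(\bsbB^{\circ(j)}-\nabla^{\circ(j)};\lambda)$ in the remaining rows. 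Because $\|\bsbB-(\bsbB^{(j)}-\nabla^{(j)})\|_F^2$ splits into the first-row piece plus the $\bsbB^{\circ}$ piece and the penalty involves only $\bsbB^{\circ}$, Proposition \ref{uniqsol-mat} applied to the $\bsbB^{\circ}$ block identifies $\bsbB^{(j+1)}$ as the unique minimizer of
\[
g_j(\bsbB) = \tfrac12\|\bsbB-\bsbB^{(j)}\|_F^2 + \mathrm{tr}\{(\nabla^{(j)})^{T}(\bsbB-\bsbB^{(j)})\} + \sum_{s} P(\sigma_{s}^{(\bsbB^{\circ})};\lambda).
\]
Completing the square shows $g_j$ differs by an additive constant from the sum of an unpenalized quadratic in $\tilde\bsbb_0$ and a function of exactly the form appearing in Proposition \ref{unifuncopt-mat} in $\bsbB^{\circ}$ (with centering $\bsbB^{\circ(j)}-\nabla^{\circ(j)}$). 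Applying that proposition blockwise then yields, for every $\bsbDelta$, $g_j(\bsbB^{(j+1)}+\bsbDelta)-g_j(\bsbB^{(j+1)})\ge \tfrac{C_1}{2}\|\bsbDelta\|_F^2$ with $C_1=1-{\mathcal L}_\Theta$, the unpenalized block merely contributing the better constant $1\ge C_1$.

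Next I would control the loss. For each $k$, Taylor expansion of the $C^{2}$ convex map $\bsbb\mapsto-\sum_i L_{i,k}(\bsbb)$ with exact remainder produces a point $\bsbxi_k$ on the open segment from $\bsbb_k^{(j)}$ to $\bsbb_k^{(j+1)}$, i.e. $\bsbxi_k\in A_k$, for which the increment of the negative log-likelihood equals its first-order Taylor term plus $\tfrac12(\bsbb_k^{(j+1)}-\bsbb_k^{(j)})^{T}\bsbmI(\bsbxi_k)(\bsbb_k^{(j+1)}-\bsbb_k^{(j)})$; since $0\preceq\bsbmI(\bsbxi_k)$ and $\|\bsbmI(\bsbxi_k)\|_{2}\le\rho$, summing over $k$ bounds $-L(\bsbB^{(j+1)})$ above by its first-order model at $\bsbB^{(j)}$ plus $\tfrac{\rho}{2}\|\bsbB^{(j+1)}-\bsbB^{(j)}\|_F^2$. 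Adding $\sum_s P(\sigma_s^{(\bsbB^{\circ(j+1)})};\lambda)$ to both sides, subtracting $F(\bsbB^{(j)})$, and noting $g_j(\bsbB^{(j)})=\sum_s P(\sigma_s^{(\bsbB^{\circ(j)})};\lambda)$ gives $F(\bsbB^{(j+1)})-F(\bsbB^{(j)})\le [g_j(\bsbB^{(j+1)})-g_j(\bsbB^{(j)})]+\tfrac{\rho-1}{2}\|\bsbB^{(j+1)}-\bsbB^{(j)}\|_F^2$. Invoking the previous step with $\bsbDelta=\bsbB^{(j)}-\bsbB^{(j+1)}$ to replace $g_j(\bsbB^{(j+1)})-g_j(\bsbB^{(j)})$ by a quantity $\le-\tfrac{C_1}{2}\|\bsbB^{(j)}-\bsbB^{(j+1)}\|_F^2$ and collecting constants yields \eqref{asympreg_mat} with $C=2-{\mathcal L}_\Theta-\rho$, which is positive by the hypothesis $\rho<2-{\mathcal L}_\Theta$; in particular $\{F(\bsbB^{(j)})\}$ is strictly decreasing. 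For the limit-point claim, $F$ is continuous, so along a subsequence $\bsbB^{(j_l)}\to\bsbB^{*}$ the monotone sequence $F(\bsbB^{(j)})$ converges to $F(\bsbB^{*})$, whence $F(\bsbB^{(j)})-F(\bsbB^{(j+1)})\to0$, and \eqref{asympreg_mat} forces $\|\bsbB^{(j)}-\bsbB^{(j+1)}\|_F\to0$, so $\bsbB^{(j_l+1)}\to\bsbB^{*}$ as well; passing to the limit in \eqref{tisp-mat} along $j_l$, using continuity of $\bsbmu(\cdot)$ and of $\bsbB\mapsto\Theta^{\sigma}(\bsbB;\lambda)$ at $\bsbB^{\circ*}+\bsbX^{\circ T}\bsbY-\bsbX^{\circ T}\bsbmu(\bsbB^{*})$ (a consequence of continuity of the singular values together with the assumed continuity of $\Theta$ at each of them), I obtain \eqref{theta-mat}.

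I expect the main obstacle to be bookkeeping rather than any deep difficulty. Propositions \ref{uniqsol-mat} and \ref{unifuncopt-mat} are stated for a fully penalized matrix, so one must cleanly separate the penalty-free intercept block $\tilde\bsbb_0$ from the penalized block $\bsbB^{\circ}$ and keep track that the former contributes constant $1$ while the latter contributes $C_1=1-{\mathcal L}_\Theta$. The other delicate point is the continuity of the matrix thresholding map $\Theta^{\sigma}$ at the relevant limit matrix under only the pointwise-continuity hypothesis on $\Theta$, since the SVD factors $\bsbU,\bsbV$ need not vary continuously when singular values coincide even though $\Theta^{\sigma}(\cdot;\lambda)$ itself does; establishing (or citing) this closes the argument. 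Note that no lower bound on $F$ is needed anywhere.
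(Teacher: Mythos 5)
Your proposal is correct and follows essentially the same route as the paper: your surrogate $g_j$ is, up to an additive constant, the paper's majorizing function $G(\bsbB^{(j)},\cdot)$, and you likewise combine Proposition \ref{uniqsol-mat} (to identify $\bsbB^{(j+1)}$ as the surrogate minimizer), Proposition \ref{unifuncopt-mat} (for the strong-minimality bound with $C_1=1-{\mathcal L}_\Theta$), a second-order Taylor expansion with the information matrix bounded by $\rho$, and a continuity passage to the limit for \eqref{theta-mat}. The bookkeeping that separates the unpenalized intercept row from $\bsbB^{\circ}$ and the resulting constant $C=2-{\mathcal L}_\Theta-\rho$ match the paper's Appendix \ref{approofpenalg} exactly.
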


The proof details are given in Appendix \ref{approofpenalg}.

Recall that $\mathcal L_{\Theta}\leq 1$. In implementation, we can scale the model matrix  by $\bsbX / k_0$ for any $k_0\geq  \sqrt{\rho}$ regardless of $\Theta$, and then perform \eqref{tisp-mat}. The $\Theta^{\sigma}$-estimate,   obtained  on the scaled data, can be scaled back to give an estimate on the original $\bsbX$.
The procedure  has a theoretical guarantee of  convergence and  \eqref{asympreg_mat} yields a good stopping criterion based on the change in   $\bsbB^{(j)}$. Empirically, we always observe $\bsbB^{(j)}$ has a unique limit point.
Similar to \citeasnoun{SheGLMTISP}, when it is possible to explicitly calculate the curvature parameter $\mathcal L_{\Theta}$, say for SCAD or soft-thresholding, we recommend using the smallest possible value of $k_0=\sqrt{{\rho}/{(2-\mathcal L_{\Theta})}}$, which   significantly speeds the convergence of the algorithm based on extensive experience. (For example, with a  convex penalty  we can set $k_0=\sqrt{{\rho}/{2}}$.)
We give two typical situations to find an upper bound for $\rho$  in theory.



\begin{example}[\textbf{Penalized Vector Gaussian GLM}] \label{exmultigauss}
For Gaussian regression, we can ignore the intercept term (after centering both responses and predictors beforehand), and the objective function \eqref{oriprob-glm} becomes
\begin{eqnarray}
\|\bsbY -\bsbX \bsbB\|_F^2/2 + \sum_{s=1}^{p\wedge m} P(\sigma_{s}^{(\bsbB)}; \lambda). \label{oriprob-glm-gauss}
\end{eqnarray}
\eqref{tisp-mat} reduces to
\begin{align}
\bsbB^{(j+1)} & = \Theta^\sigma (\bsbB^{(j)} + \bsbX^T\bsbY - \bsbX^T \bsbX \bsbB; \lambda ). \label{gaussmat}
\end{align}
Here, $\bsbmI=\bsbX^T \bsbW\bsbX=\bsbX^T\bsbX$. According to the theorem, $k_0$ can be chosen to be $\|\bsbX\|_2$ {regardless} of the thresholding rule and the penalty, where $\| \cdot \|_2$ denotes the spectral norm.

In the special case of imposing a direct rank penalty, where $\sum_{s=1}^{p\wedge m} P(\sigma_{s}^{(\bsbB)}; \lambda) = \frac{\lambda^2}{2} rank(B)$, another computational procedure based on the classical  RRR  algorithm   can be used. In fact, RRR studies a relevant but different problem,  with no penalty but subject to a low rank constraint. But we can  adapt the procedure  to minimizing $\|\bsbY-\bsbX\bsbB\|_F^2/2 + \lambda^2/2 \cdot \mbox{rank}(\bsbB)$ as follows (cf. \citeasnoun*{BuneaSheWegkamp}).
Suppose $\bsbX^T\bsbX$ is nonsingular and  $\bsbH$ is the hat matrix $\bsbX(\bsbX^T\bsbX)^{-1}\bsbX^T$.
(i) Apply spectral decomposition to $\bsbY^T \bsbH\bsbY$: $\bsbY^T \bsbH\bsbY=\bsbV \bsbD^2 \bsbV^T$ where $\bsbD=\mbox{diag}\{d_{1}, \cdots, d_m\}$ with $d_1\geq d_2\geq \cdots \geq d_m\geq 0$. (ii) Given any value of $\lambda$, define $r=\max\{i: d_i\geq \lambda\}$ and $\bsbV_r=\bsbV[\ ,1\mbox{:}r]$, by taking the first $r$ columns in $\bsbV$. (iii) Then the (globally) optimal solution is given by $$\hat\bsbB(\lambda) = (\bsbX^T\bsbX)^{-1}\bsbX^T \bsbY  {\mathcal P}_{\bsbV_r}=(\bsbX^T\bsbX)^{-1}\bsbX^T \bsbY \bsbV_r \bsbV_r^T,$$ where ${\mathcal P}_{\bsbV_r}$ is the orthogonal projection onto the column space of $\bsbV_r$.
We can show the $\Theta^{\sigma}$-estimate defined by \eqref{gaussmat} reduces to the RRR estimate in this case, the proof details given in  Appendix \ref{approofequiv}.
\begin{proposition}\label{redrrequiv}
Suppose $\bsbX\in {\mathbb R}^{n\times p}$ ($n\geq p$) has full column rank and $\|\bsbX\|_2\leq 1$.
Then the RRR estimate $\hat\bsbB(\lambda)$ constructed above must satisfy the $\Theta^\sigma$-equation
$\hat\bsbB = \Theta_H^\sigma(\hat\bsbB + \bsbX^T \bsbY - \bsbX^T \bsbX \hat\bsbB;\lambda)$ for matrix hard-thresholding $\Theta_H^{\sigma}$.
\end{proposition}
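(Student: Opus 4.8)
The plan is to show that the argument of the matrix hard-threshold, $\bsbA \triangleq \hat\bsbB(\lambda) + \bsbX^T\bsbY - \bsbX^T\bsbX\hat\bsbB(\lambda)$, splits as a sum $\bsbA = \bsbA_1 + \bsbA_2$ of two matrices lying in mutually orthogonal singular subspaces, where every singular value of $\bsbA_1$ is at least $\lambda$ while every singular value of $\bsbA_2$ is strictly below $\lambda$, and moreover $\bsbA_1 = \hat\bsbB(\lambda)$; granting this, hard-thresholding at level $\lambda$ returns exactly $\bsbA_1$, which is the claim. To get the splitting I would note that $\bsbX^T\bsbY = \bsbX^T\bsbX\hat\bsbB_{ols}$ and $\hat\bsbB(\lambda) = \hat\bsbB_{ols}{\mathcal P}_{\bsbV_r}$, so $\bsbA = \bsbA_1 + \bsbA_2$ with $\bsbA_1 = \hat\bsbB_{ols}{\mathcal P}_{\bsbV_r}$ and $\bsbA_2 = \bsbX^T\bsbX\hat\bsbB_{ols}(\bsbI - {\mathcal P}_{\bsbV_r})$. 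Using ${\mathcal P}_{\bsbV_r}(\bsbI - {\mathcal P}_{\bsbV_r}) = \bsb0$ and the identity $\hat\bsbB_{ols}^T\bsbX^T\bsbX\hat\bsbB_{ols} = \bsbY^T\bsbH\bsbY = \bsbV\bsbD^2\bsbV^T$ (which, being diagonalized by $\bsbV$, commutes with ${\mathcal P}_{\bsbV_r}$ and $\bsbI - {\mathcal P}_{\bsbV_r}$), I would verify $\bsbA_1\bsbA_2^T = \bsb0$ and $\bsbA_1^T\bsbA_2 = \bsb0$, i.e. $\bsbA_1$ and $\bsbA_2$ have orthogonal column spaces and orthogonal row spaces. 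Then reduced SVDs of $\bsbA_1$ and $\bsbA_2$ concatenate to a reduced SVD of $\bsbA$, the singular values of $\bsbA$ are the union (with multiplicity) of those of $\bsbA_1$ and $\bsbA_2$, and $\Theta_H^\sigma(\bsbA;\lambda)$ acts blockwise.

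It then remains to locate the two spectra relative to $\lambda$. For $\bsbA_1 = \hat\bsbB(\lambda)$ I would compute $(\bsbX\hat\bsbB(\lambda))^T\bsbX\hat\bsbB(\lambda) = {\mathcal P}_{\bsbV_r}\bsbV\bsbD^2\bsbV^T{\mathcal P}_{\bsbV_r} = \bsbV_r\,\mathrm{diag}(d_1^2,\dots,d_r^2)\,\bsbV_r^T$, so $\bsbX\hat\bsbB(\lambda)$ has exactly the $r$ positive singular values $d_1,\dots,d_r$; since $\bsbX$ has full column rank this gives $\mathrm{rank}(\hat\bsbB(\lambda)) = r$, and the elementary bound $\sigma_i(\bsbX\bsbM) \le \|\bsbX\|_2\,\sigma_i(\bsbM)$ together with $\|\bsbX\|_2 \le 1$ yields $\sigma_i(\hat\bsbB(\lambda)) \ge d_i \ge \lambda$ for $i = 1,\dots,r$. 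For $\bsbA_2$, from $\bsb0 \preceq \bsbX^T\bsbX \preceq \bsbI$ we get $(\bsbX^T\bsbX)^2 \preceq \bsbX^T\bsbX$, hence $\bsbA_2^T\bsbA_2 = (\bsbI - {\mathcal P}_{\bsbV_r})\hat\bsbB_{ols}^T(\bsbX^T\bsbX)^2\hat\bsbB_{ols}(\bsbI - {\mathcal P}_{\bsbV_r}) \preceq (\bsbI - {\mathcal P}_{\bsbV_r})\bsbV\bsbD^2\bsbV^T(\bsbI - {\mathcal P}_{\bsbV_r})$, whose largest eigenvalue is $d_{r+1}^2 < \lambda^2$ by the definition $r = \max\{i : d_i \ge \lambda\}$; so every singular value of $\bsbA_2$ is $< \lambda$. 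Therefore $\Theta_H^\sigma(\bsbA;\lambda)$ keeps the $\bsbA_1$ block unchanged and annihilates the $\bsbA_2$ block, giving $\Theta_H^\sigma(\bsbA;\lambda) = \bsbA_1 = \hat\bsbB(\lambda)$. (If $r = m$ the $\bsbA_2$ block is void; a singular value exactly equal to $\lambda$ is excluded by the paper's standing convention of applying a thresholding rule only at points of continuity, which also reconciles the two conventions for $\Theta_H$ here.)

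The part I expect to be the crux is the structural observation of the orthogonal splitting in the first paragraph together with the singular-value bookkeeping in the second: one has to recognize that the uncorrelatedness built into the reduced-rank construction is precisely an orthogonal decomposition of the fixed-point residual, and then exploit $\|\bsbX\|_2 \le 1$ in two distinct ways — the first-order inequality $\sigma_i(\bsbX\bsbM) \le \|\bsbX\|_2\,\sigma_i(\bsbM)$ to keep the retained block's singular values at or above $\lambda$, and the operator inequality $(\bsbX^T\bsbX)^2 \preceq \bsbX^T\bsbX$ to force the discarded block's singular values below $\lambda$ — so that the two spectra land on opposite sides of the threshold. The remaining ingredients (the algebraic identities, the rank count, the degenerate cases) are routine.
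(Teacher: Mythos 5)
Your proof is correct, and it reaches the paper's conclusion by a noticeably leaner route. The paper also splits the threshold argument into $\hat\bsbB$ plus the gradient residual $\bsbX^T\bsbY-\bsbX^T\bsbX\hat\bsbB$, but it establishes the required block SVD structure by explicitly constructing simultaneous diagonalizers: it introduces $\bsbM=(\bsbX^T\bsbX)^{-1/2}\bsbX^T\bsbY$, forms $\bsbG=\bsbD\bsbU^T(\bsbX^T\bsbX)^{-1}\bsbU\bsbD$ and $\bsbH=\bsbD\bsbU^T(\bsbX^T\bsbX)\bsbU\bsbD$, takes spectral decompositions of the blocks $\bsbG_{11}$ and $\bsbH_{22}$, assembles candidate singular-vector matrices $\bsbU_*$ and $\bsbV_*$, and then verifies their orthogonality by a trace computation. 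You shortcut all of this by observing that $\bsbA_1=\hat\bsbB_{ols}{\mathcal P}_{\bsbV_r}$ and $\bsbA_2=\bsbX^T\bsbX\hat\bsbB_{ols}(\bsbI-{\mathcal P}_{\bsbV_r})$ satisfy the two-sided orthogonality $\bsbA_1\bsbA_2^T=\bsb0$ and $\bsbA_1^T\bsbA_2=\bsb0$ (both following in one line from ${\mathcal P}_{\bsbV_r}(\bsbI-{\mathcal P}_{\bsbV_r})=\bsb0$ and the fact that $\hat\bsbB_{ols}^T\bsbX^T\bsbX\hat\bsbB_{ols}=\bsbV\bsbD^2\bsbV^T$ commutes with ${\mathcal P}_{\bsbV_r}$), so the reduced SVDs concatenate automatically and no explicit joint basis is needed. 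Your singular-value bookkeeping on the discarded block ($\bsbA_2^T\bsbA_2\preceq(\bsbI-{\mathcal P}_{\bsbV_r})\bsbV\bsbD^2\bsbV^T(\bsbI-{\mathcal P}_{\bsbV_r})$, hence top singular value $\le d_{r+1}<\lambda$) matches the paper's step (iv) in substance, just via the operator inequality $(\bsbX^T\bsbX)^2\preceq\bsbX^T\bsbX$ rather than a spectral-norm product bound. One genuine value-add on your side: you explicitly verify that the retained block's singular values satisfy $\sigma_i(\hat\bsbB)\ge d_i\ge\lambda$ (via $\sigma_i(\bsbX\bsbM)\le\|\bsbX\|_2\,\sigma_i(\bsbM)$ and $\|\bsbX\|_2\le1$), which is needed for hard-thresholding to keep that block but is absent from the paper's list of sufficient conditions (i)--(iv); you also flag the boundary case $\sigma_i=\lambda$ and dispose of it with the paper's continuity convention. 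What the paper's longer construction buys in exchange is an explicit formula for the joint singular vectors, which it does not subsequently need; for the statement as posed, your argument is complete and preferable in its economy.
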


Unlike RRR, our  algorithm and convergent analysis do not require $\bsbX$ to have full rank or $n>p$. In comparison with the large-$p$ RSC \cite*{BuneaSheWegkamp}, \eqref{tisp-mat}  applies to any $\Theta$ (and covers all vector GLMs).
\end{example}

\begin{example}[\textbf{Penalized Vector Logistic GLM}]
Assume a classification setup where $y_{ik}$ are all binary. The singular-value penalized vector logistic regression minimizes 
\begin{eqnarray}
-\sum_{k=1}^m \sum_{i=1}^n \left( y_{i,k}\bsbx_i^T \bsbb_k - \log(1+\exp(\bsbx_i^T \bsbb_k))\right) + \sum_{s=1}^{p\wedge m} P(\sigma_{s}^{(\bsbB^\circ)}; \lambda). \label{oriprob-glm-logistic}
\end{eqnarray}
The first iteration step in \eqref{tisp-mat} becomes
\begin{align}
\bsbB^{^\circ(j+1)} & = \Theta^\sigma (\bsbB^{^\circ(j)} + \bsbX^{^\circ T}\bsbY - \bsbX^{^\circ T} \left[ {1}/({1+\exp(-\bsbx_i^T \bsbb_k^{(j)})})\right]_{n\times m}; \lambda ). \label{logitmat}
\end{align}
In {\tt R}, the  matrix $\bsbmu(\bsbB)$ can be simply constructed by {\tt 1/(1+exp(-X\%*\%B))}. 
Since $\bsbW(\bsbb)=\mbox{diag}\{b''(\bsbx_i^T\bsbb)\}=\mbox{diag}\{\pi_i(1-\pi_i)\} \preceq\bsbI/4$ with $\pi_i = 1/ (1+\exp(-\bsbx_i^T \bsbb))$, a crude but general choice of the scaling constant is  $k_0\geq \|\bsbX\|_2/2$, again, regardless of $\Theta$ and $\lambda$. Yet in applying a convex penalty such as the nuclear norm penalty, we can use $k_0= \| \bsbX \|_2/(2\sqrt 2)$ to speed the convergence.   \\
\end{example}

\paragraph*{Some related works}
There has been a surge of interest in nuclear norm penalization recently, in which case the penalty in \eqref{oriprob-glm} simplifies to a multiple of the sum of all singular values of $\bsbB^\circ$ or $\lambda \| \bsbB^\circ\|_*$. This gives a \emph{convex} optimization problem. In the statistics community, \citeasnoun**{YuanNNP} seem to be the first to study the nuclear norm penalized least squares estimator. A popular equivalent formulation of the nuclear norm minimization in optimization is through    semidefinite programming (SDP)~\cite{Fazel}.
See, e.g., \citeasnoun{CandRecht}, \citeasnoun{CandTao}, and \citeasnoun**{Ma} for some recent theoretical and computational achievements.

Although the nuclear norm provides a convex relaxation to the rank penalty,  this approximation  works only under certain regularity conditions (e.g., \citeasnoun{candplan}). \citeasnoun*{BuneaSheWegkamp} show that  direct rank penalization  achieves the same oracle rate in a much less restrictive manner. Yet in addition to the reduced rank regression studies (see  Example \ref{exmultigauss}), there have been very few attempts to extend the rank penalization beyond the Gaussian framework.
Two commonly cited works are \citeasnoun{yee} and \citeasnoun{copulas}. 
See Section \ref{sec:bg} for their limitations.
In comparison with these works, our  matrix thresholding algorithm  has a theoretical guarantee of convergence, is simple to implement, and covers a wide family of penalty functions as well as loss functions.

Finally, we point out a major difference between the thresholding-based iterative selection procedures (TISP) \cite{SheTISP} and the proposed algorithm which can be referred to as \emph{matrix-TISP}.
TISP aims for variable selection in a single-response model,  while here we discuss singular value regularization in vector GLMs. The singular-value sparsity or low rankness, different than   coefficient sparsity, offers a new type of parsimony that can be used for supervised feature extraction. It   brings a true multivariate flavor into our analysis.


\paragraph*{Feature extraction}
In many high-dimensional problems, \textit{feature extraction}, by transforming the input variables and creating a reduced set of new features, is a useful technique for dimension-reduction.
For example, PCA considers linear projections of correlated variables to construct new orthogonal features ordered by decreasing variances.
For singular-value penalized models, once a low-rank estimate $\hat\bsbB$ is obtained, one can attain the same goal.
Suppose the rank of $\hat\bsbB$ is $r$.
A direct way is to apply the reduced form SVD on $\hat\bsbB$, getting $\hat\bsbB=\bsbU \bsbD \bsbV^T$ with $\bsbD$ an ${r\times r}$ diagonal matrix. Next,   construct a new model matrix
\begin{align}
\mbox{\emph{Type-I:} } \quad \bsbZ\triangleq \bsbX\bsbU \ (\mbox{or } \bsbX\bsbU\bsbD) \label{type-I}
\end{align}
which has   only $r$ new predictors. We refer to this as  \emph{Type-I  extraction}. It can  be used for parameter tuning later.

On the other hand, it may be preferred to work on $\bsbX\hat\bsbB$ in some situations. Perform the spectral decomposition $\hat\bsbB^T\bsbX^T\bsbX\hat \bsbB = \bsbV \bsbD \bsbV^T$, where $\bsbV$ is an $m\times r$ orthogonal matrix. It follows that $\bsbX \hat\bsbB = \bsbX \hat\bsbB \bsbV \bsbV^T$.   Therefore, for the new design matrix $\bsbZ$ defined by
\begin{align}
\mbox{\emph{Type-II (or Post-Decorrelation)}: }  \bsbZ \triangleq \bsbX (\hat\bsbB \bsbV)\in {\mathbb R}^{n\times r}, \label{type-II}
\end{align}
each  column ($z$-predictor) can be represented as a linear combination of the columns of $\bsbX$, and the $r$ newly obtained   $z$-predictors are uncorrelated with each other, i.e., $\bsbZ^T\bsbZ$ is diagonal. We refer to this as \emph{Type-II  extraction} or \emph{post-decorrelation}. Type-I and Type-II are   not equivalent   in general (but coincide for the RRR estimate). The (linear) feature extraction is supervised 
and the corresponding dimension reduction can be dramatic when $r$ is much smaller than $p$.

\paragraph*{Initial point}
When the problem \eqref{oriprob-glm} is convex, we can further show (based on Theorem \ref{conv_mat}) that any $\Theta^\sigma$-estimate is a global minimum point. In this case,  the choice of the initial estimate $\bsbB^{(0)}$ is not essential and a pathwise algorithm with warm starts can be used in computing the solution path $\hat\bsbB(\lambda)$ for a series of values of $\lambda$. However, for nonconvex problems   we do not have such global optimality  given any  initial point $\bsbB^{0}$. Although one can try multiple random starts,
we   found  that  empirically, simply setting $\bsbB^{(0)}$ to be the zero matrix 
leads to a  solution with   good statistical performance. Intuitively, this looks for a local optimum that is close to zero. Of course, other initialization choices are possible.


\paragraph*{Parameter tuning} The challenge still comes from   nonconvexity. Take  the rank penalty as an example. The solution path $\hat\bsbB(\lambda)$ is \emph{discontinuous}, while the optimal penalty parameter $\lambda$ (as a surrogate for the Lagrange multiplier in convex programmings) is a function of both the data $(\bsbX, \bsbY)$ and the true coefficient $\bsbB$. Therefore, plain cross-validation with respect to $\lambda$  does not seem to be appropriate, as slightly perturbed data may result in serious regularization parameter mismatches.  We propose to cross-validate the \emph{range space} of the low rank estimator 
(as a function of $\lambda$) and refer to it as   the projective cross-validation (\textbf{PCV}). In the following, we focus on the rank-Frobenius   penalty \eqref{rankfropen} and the associated hard-ridge thresholding rule \eqref{hybridthfunc} to describe the idea.
Let $\hat \bsbB$ be a $\Theta_{HR}^{\sigma}$ estimator obtained from \eqref{tisp-mat} and write it as $\left[\begin{array}{c}\hat{\tilde \bsbb}_0^T \\ \hat{\bsbB^{\circ}}\end{array}\right]$ following our previous notation   (cf. \eqref{predsplit}). For the penalized part $\hat\bsbB^\circ$, denote its rank by $r$ and its  SVD by $\hat\bsbB^\circ = \bsbU \bsbD \bsbV^T$ with $\bsbD\in {\mathbb R}^{r\times r}$. Construct $r$ new features (Type-I)   $\bsbZ^\circ =  \bsbX^\circ \bsbU$  and set $\bsbZ = [\tilde \bsbx_0, \bsbZ^\circ]=[\bsbz_1, \cdots, \bsbz_{n}]^T$.
Let $\hat \bsbC^{\circ} = \bsbD \bsbV^T$ and  $\hat \bsbC = [\hat{\tilde \bsbb}_0,  \hat{\bsbC}^{\circ T}]^T=[\bsbc_1, \cdots, \bsbc_m]\in {\mathbb R}^{(r+1)\times m}$.

\begin{proposition}
\label{rowspaceHR}
Under the condition on $\rho$ given in Theorem  \ref{conv_mat}, for any $\Theta_{HR}^{\sigma}$-estimator $\hat \bsbB$, $\hat \bsbC$ defined above is a Frobenius   penalized estimator associated with  new model matrix  $\bsbZ$, i.e.,
\begin{align}
 \hat \bsbC  \in  \arg \min_{\bsbC \in \mathbb R^{(r+1)\times m}} -\sum_{k=1}^m \sum_{i=1}^n L_{i,k}(\bsbc_k;   \bsbz_i, y_{ik}) + \frac{\eta}{2} \| \bsbC^\circ \|_F^2. \label{localridgeopt}
\end{align}
\end{proposition}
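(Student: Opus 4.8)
The plan is to exploit convexity: the objective in \eqref{localridgeopt} is the negative log-likelihood of a canonical-link vector GLM built on the design matrix $\bsbZ$, plus the quadratic penalty $\frac{\eta}{2}\|\bsbC^\circ\|_F^2$, and since the log-partition function $b$ is convex this objective is convex and smooth in $\bsbC$. Hence it suffices to check that $\hat\bsbC$ satisfies the first-order stationarity equations of \eqref{localridgeopt}. Writing $\bsbmu_{\bsbZ}(\bsbC)$ for the mean matrix assembled from the linear predictors $\bsbz_i^T\bsbc_k$, and noting that the penalty does not act on the intercept row, these conditions read $(\bsbY-\bsbmu_{\bsbZ}(\bsbC))^T\tilde\bsbx_0=\bsb{0}$ together with $\bsbZ^{\circ T}(\bsbY-\bsbmu_{\bsbZ}(\bsbC))=\eta\,\bsbC^\circ$.

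First I would record the identity $\bsbZ\hat\bsbC=\bsbX\hat\bsbB$: since $\bsbZ^\circ\hat\bsbC^\circ=\bsbX^\circ\bsbU\bsbD\bsbV^T=\bsbX^\circ\hat\bsbB^\circ$ and the intercept columns agree by construction, the two linear-predictor matrices coincide, so $\bsbmu_{\bsbZ}(\hat\bsbC)=\bsbmu(\hat\bsbB)$. As a consequence, the intercept stationarity equation for \eqref{localridgeopt} evaluated at $\hat\bsbC$ is exactly the second line of the $\Theta^\sigma$-system \eqref{theta-mat}, which $\hat\bsbB$ satisfies as a $\Theta_{HR}^\sigma$-estimator (Theorem \ref{conv_mat}).

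The substantive step is the $\bsbC^\circ$-equation. Set $\bsb{R}=\hat\bsbB^\circ+\bsbX^{\circ T}\bsbY-\bsbX^{\circ T}\bsbmu(\hat\bsbB)$, so that the first line of \eqref{theta-mat} reads $\hat\bsbB^\circ=\Theta_{HR}^{\sigma}(\bsb{R};\lambda,\eta)$. By Definition \ref{def:thresholdmat}, $\bsb{R}$ and $\hat\bsbB^\circ$ share a common pair of singular-vector matrices; because $\Theta_{HR}$ in \eqref{hybridthfunc} either annihilates a singular value (when it falls below $\lambda$) or rescales it by $1/(1+\eta)$ (when it is at least $\lambda$), and because the continuity hypothesis of Theorem \ref{conv_mat} forbids any singular value of $\bsb{R}$ from being equal to $\lambda$, one may write $\bsb{R}=(1+\eta)\hat\bsbB^\circ+\bsb{R}_\perp$, where $\bsb{R}_\perp$ collects the thresholded-away part and therefore has column space orthogonal to that of $\hat\bsbB^\circ$ --- equivalently, to the range of $\bsbU$ --- and row space orthogonal to the range of $\bsbV$. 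This splitting is insensitive to the SVD ambiguity used to form $\bsbZ$ and $\hat\bsbC$, since the ranges of $\bsbU$ and $\bsbV$ equal the column space and the row space of $\hat\bsbB^\circ$. Therefore $\bsbX^{\circ T}\bsbY-\bsbX^{\circ T}\bsbmu(\hat\bsbB)=\bsb{R}-\hat\bsbB^\circ=\eta\hat\bsbB^\circ+\bsb{R}_\perp$, and left-multiplying by $\bsbZ^{\circ T}=\bsbU^T\bsbX^{\circ T}$ annihilates $\bsb{R}_\perp$ and leaves $\eta\,\bsbU^T\hat\bsbB^\circ=\eta\,\bsbD\bsbV^T=\eta\,\hat\bsbC^\circ$, which is precisely the required identity.

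With both stationarity equations verified and the program convex, $\hat\bsbC$ is a global minimizer of \eqref{localridgeopt}, establishing the claim. I expect the main obstacle to be the third step: carefully tracking how the hard-ridge matrix thresholding interacts with the SVD so as to produce the clean decomposition $\bsb{R}=(1+\eta)\hat\bsbB^\circ+\bsb{R}_\perp$ with the stated orthogonality, and confirming that this --- and hence the whole conclusion --- does not depend on which SVD of $\hat\bsbB^\circ$ is chosen. The continuity assumption of Theorem \ref{conv_mat} is exactly what prevents a singular value of $\bsb{R}$ from sitting on the discontinuity $\lambda$ of $\Theta_{HR}$, which is what keeps the decomposition, and the resulting estimator $\hat\bsbC$, well defined.
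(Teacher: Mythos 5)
Your proof is correct and takes essentially the same route as the paper's: check the KKT conditions of the convex problem \eqref{localridgeopt} starting from the $\Theta^{\sigma}$ fixed-point equation, split off the thresholded-away part of $\hat\bsbB^{\circ}+\bsbX^{\circ T}(\bsbY-\bsbmu(\hat\bsbB))$ (which the paper writes as $\lambda(1+\eta)\bar\bsbU\bsbS\bar\bsbV^T$ and you write as $\bsb{R}_\perp$ with orthogonal column and row spaces), annihilate it by left-multiplying with $\bsbU^T$, and use $\bsbZ\hat\bsbC=\bsbX\hat\bsbB$ to identify the mean matrices. The differences are purely notational, and your explicit handling of the SVD ambiguity is if anything slightly more careful than the paper's.
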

See its proof in Appendix \ref{approofhrlocal}. When  $\eta>0$ or $\bsbZ$ has full column rank, the optimization problem \eqref{localridgeopt} is strictly convex and so $\hat \bsbC$ is the unique optimal solution.

The proposition implies that once $\bsbU$ is extracted, we can simply use maximum likelihood estimation   on the projected data to obtain the rank penalized estimator, or    ridge penalized   estimation to obtain the rank-Frobenius  penalized estimator. There is no need to run  the more expensive reduced rank fitting algorithms.


We now state the $K$-fold PCV procedure for tuning the rank penalty parameter in  \eqref{penform}.
\begin{enumerate}
\item Run Algorithm \eqref{tisp-mat} on the \emph{whole} dataset for a grid of values for $\lambda$. The solution path is denoted by $\hat \bsbB(\lambda_l)$, $l=1,\cdots, L$.
\item
Obtain $L$ candidate models via \eqref{type-I}, each with a new model matrix   $\bsbZ(l) = \bsbX \bsbU(l)$, $1\leq l \leq L$.
\item
Compute the cross-validation error for each model. Concretely, partition the sample index set   into $K$ (roughly) even subsets ${\mathcal T}_1, \cdots {\mathcal T}_K$.
Given   $\bsbZ(l)$, fit a vector GLM on the data without  the subset indexed by ${\mathcal T}_k$, and evaluate its validation  error (measured by   deviance) on the left-out subset. In all,  $K$ maximum likelihood estimates  are obtained and their validation errors are summed up to yield the CV  error of the candidate model  $\bsbZ(l)$. Repeat this for all $l: 1\leq l \leq L$.
\item
Find the optimal model that minimizes the CV error.
\end{enumerate}
In the pursuit of a parsimonious model with very low rank, a BIC penalty term can be added to the CV error   \cite{SheGLMTISP}. This is necessary  in the  large-$p$ setup  \cite{chenchen}.

PCV is much more efficient than  CV because the more involved reduced rank fitting algorithm runs only  once  beforehand, rather than $K$ times  in the CV trainings. The ML fitting   in Step 3,   justified by Proposition \ref{rowspaceHR},   involves  very  few  predictors.   Another benefit of PCV is that   the parameter mismatch issue is eliminated and all  $K$ trainings are regarding the same model and feature space.

When there is an additional ridge parameter (cf. \eqref{rankfropen}), the procedure still applies, but a two-dimensional grid for $(\lambda, \eta)$ has to be used. Fortunately, according to our experience,   the statistical performance is not very sensitive to small changes in the ridge parameter and we can choose a sparse grid for it. Step 3 now fits a series of  $l_2$-penalized GLMs. But again, this type of problems  is smooth and  convex;   Newton-based algorithms are reasonably fast. Finally, we mention that PCV shares some similarities to  the selective cross-validation (SCV) proposed for variable selection \cite{SheGLMTISP}.

\section{Rank Constrained  Vector GLMs for Feature Space Reduction}
\label{sec:conGLM}
In this section, we study the reduced rank GLMs in constraint form (cf. \eqref{conform}).
For any ${r}\geq 1$ and $\eta\geq 0$,   the problem of interest is
\begin{eqnarray}
\min_{\bsbB \in \mathbb R^{(p+1)\times m}}  -\sum_{k=1}^m \sum_{i=1}^n L_{i,k}(\bsbb_k;  \bsbx_i, y_{ik}) + \frac{\eta}{2} \|\bsbB^{\circ} \|_F^2 \quad \mbox{ s.t. } \quad rank(\bsbB^\circ)\leq {r}, \label{oriprob-glm-constr}
\end{eqnarray}
The additional Frobenius norm penalty is   to handle collinearity. Again, neither the penalty nor the constraint is imposed on the first row of $\bsbB$.

We introduce a \emph{quantile thresholding rule} $\Theta^{\#}(\cdot; {r}, \eta)$ as a variant of the hard-ridge thresholding. Given $1\le {r}  \le p$ and $\eta\ge 0$, $\Theta^{\#}(\bsb{a};{r}, \eta): {\mathbb R}^p\rightarrow {\mathbb R}^p$ is defined for any $\bsb{a}\in {\mathbb R}^p$ such that the ${r}$ largest components of $\bsb{a}$ (in absolute value) are shrunk   by a factor of $(1+\eta)$ and the remaining components are all set to be zero.
In the case of ties, a random tie breaking rule is used.
The matrix version of $\Theta^{\#}$ is defined as
\begin{eqnarray*}
\Theta^{\#\sigma}(\bsbB;\lambda)\triangleq \bsbU\mbox{diag}\{\Theta^{\#}([\sigma_{i}^{(\bsbB)}]; {r}, \eta)\} \bsbV^T, \quad \forall \bsbB \in \mathbb R^{p\times m}
\end{eqnarray*}
where $\bsbU$, $\bsbV$, and $\mbox{diag}\{\sigma_{i}^{(\bsbB)}\}$ are obtained from the SVD of $\bsbB$: $\bsbB=\bsbU^T \mbox{diag}\{\sigma_{i}^{(\bsbB)}\} \bsbV$.

Then, a simple procedure similar to \eqref{tisp-mat} can be used to solve \eqref{oriprob-glm-constr}:
given $\bsbB^{(j)}$, perform the update
\begin{align}
\begin{cases}
\bsbB^{\circ (j+1)} & =  \Theta^{\#\sigma}(\bsbB^{\circ (j)}+ \bsbX^{\circ T} \bsbY - \bsbX^{\circ T} \bsbmu(\bsbB^{(j)}); {r}, \eta), \\
\tilde  \bsbb_0^{(j+1)} & =  \tilde\bsbb_0^{(j)}  +   (\bsbY - \bsbmu(\bsbB^{(j)}))^T \tilde \bsbx_{0}.
\end{cases}
\label{tisp-mat-constr}
\end{align}
Starting with any $\bsbB^{(0)}\in {\mathbb R}^{(p+1)\times m}$, denote the sequence  obtained from \eqref{tisp-mat-constr} by $\{\bsbB^{(j)}\}$. Let $F$ be the objective function   \eqref{oriprob-glm-constr}. Define $\rho$   as   in Theorem \ref{conv_mat}.
\begin{theorem}
\label{constrconv_mat}
If   $\rho \le  1$,
$F(\bsbB^{(j)})$ is decreasing and satisfies
\begin{eqnarray*}
F(\bsbB^{(j)})- F(\bsbB^{(j+1)})\geq (1-\rho) \|\bsbB^{(j)}-\bsbB^{(j+1)}\|_F^2/2,  
\end{eqnarray*}
 and $rank(\bsbB^{\circ (j)})\leq {r}$, $\forall j\geq 1$.
\end{theorem}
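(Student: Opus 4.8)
The plan is to read \eqref{tisp-mat-constr} as one step of majorization--minimization and to recycle the machinery behind Theorem \ref{conv_mat}. Write $\ell(\bsbB) = -\sum_{k=1}^m\sum_{i=1}^n L_{i,k}(\bsbb_k;\bsbx_i,y_{ik})$ for the negative log-likelihood, so that $\nabla_{\bsbB^\circ}\ell(\bsbB) = -\bsbX^{\circ T}(\bsbY-\bsbmu(\bsbB))$ and $\nabla_{\tilde\bsbb_0}\ell(\bsbB) = -(\bsbY-\bsbmu(\bsbB))^T\tilde\bsbx_0$, and introduce the surrogate (with $\langle\cdot,\cdot\rangle$ the trace inner product)
\[
g(\bsbB;\bsbB^{(j)}) \triangleq \ell(\bsbB^{(j)}) + \langle \nabla\ell(\bsbB^{(j)}),\, \bsbB-\bsbB^{(j)}\rangle + \tfrac12\|\bsbB-\bsbB^{(j)}\|_F^2 + \tfrac\eta2\|\bsbB^\circ\|_F^2 .
\]
Since $g(\cdot;\bsbB^{(j)})$ has no cross terms between $\tilde\bsbb_0$ and $\bsbB^\circ$, minimizing it over the feasible set $\{\mathrm{rank}(\bsbB^\circ)\le r\}$ splits: the intercept block is an unconstrained quadratic minimized at $\tilde\bsbb_0^{(j)}+(\bsbY-\bsbmu(\bsbB^{(j)}))^T\tilde\bsbx_0$, and, after completing the square, the $\bsbB^\circ$ block becomes $\min_{\mathrm{rank}\le r}\{\tfrac12\|\bsbB^\circ-\bsbA^{(j)}\|_F^2+\tfrac\eta2\|\bsbB^\circ\|_F^2\}$ with $\bsbA^{(j)}=\bsbB^{\circ(j)}+\bsbX^{\circ T}(\bsbY-\bsbmu(\bsbB^{(j)}))$. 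The first substep I would isolate as a lemma: the latter problem is solved by $\Theta^{\#\sigma}(\bsbA^{(j)};r,\eta)$. Indeed its objective is, up to an additive constant in $\bsbB^\circ$, equal to $\tfrac{1+\eta}{2}\|\bsbB^\circ-\bsbA^{(j)}/(1+\eta)\|_F^2$, so by the Eckart--Young theorem it is minimized by the top-$r$ SVD truncation of $\bsbA^{(j)}/(1+\eta)$, which is exactly the quantile thresholding of $\bsbA^{(j)}$ (with any fixed resolution of ties, matching the random tie-break convention). This identification makes \eqref{tisp-mat-constr} coincide with $\bsbB^{(j+1)}=\arg\min g(\cdot;\bsbB^{(j)})$ over the feasible set, and in particular forces $\mathrm{rank}(\bsbB^{\circ(j)})\le r$ for all $j\ge1$.

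With this in hand the descent estimate follows from two inequalities. First, for $j\ge1$ the previous iterate $\bsbB^{(j)}$ is feasible, so by global optimality of $\bsbB^{(j+1)}$ for the surrogate, $g(\bsbB^{(j+1)};\bsbB^{(j)})\le g(\bsbB^{(j)};\bsbB^{(j)})=F(\bsbB^{(j)})$; crucially this uses only that $\Theta^{\#\sigma}$ returns a \emph{global} minimizer of the convex subproblem, so the nonconvexity of the rank constraint is harmless here. Second, I would bound the majorization gap from below. Writing $\ell=\sum_k\ell_k$ with each $\ell_k$ convex and $C^2$ and applying Taylor's theorem with mean-value remainder along the segment joining $\bsbb_k^{(j)}$ and $\bsbb_k$, one finds
\[
g(\bsbB;\bsbB^{(j)}) - F(\bsbB) = \tfrac12\sum_{k=1}^m (\bsbb_k-\bsbb_k^{(j)})^T\bigl(\bsbI - \bsbmI(\bsbxi_k)\bigr)(\bsbb_k-\bsbb_k^{(j)})
\]
for suitable $\bsbxi_k$ on that segment. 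Taking $\bsbB=\bsbB^{(j+1)}$, these $\bsbxi_k$ lie in $A_k$, so $\|\bsbmI(\bsbxi_k)\|_2\le\rho$, hence $\bsbI-\bsbmI(\bsbxi_k)\succeq(1-\rho)\bsbI$ and $g(\bsbB^{(j+1)};\bsbB^{(j)})-F(\bsbB^{(j+1)})\ge\tfrac{1-\rho}{2}\|\bsbB^{(j)}-\bsbB^{(j+1)}\|_F^2$. Chaining, $F(\bsbB^{(j)})-F(\bsbB^{(j+1)})\ge g(\bsbB^{(j+1)};\bsbB^{(j)})-F(\bsbB^{(j+1)})\ge\tfrac{1-\rho}{2}\|\bsbB^{(j)}-\bsbB^{(j+1)}\|_F^2$, which is nonnegative once $\rho\le1$; monotonicity of $\{F(\bsbB^{(j)})\}_{j\ge1}$ is then immediate.

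I do not expect a genuine obstacle. The one point needing care is the lemma identifying $\Theta^{\#\sigma}$ with the exact solution of the rank-constrained ridge subproblem, tie-breaking included; this is a routine consequence of the Eckart--Young theorem and runs parallel to Proposition \ref{uniqsol-mat}, with the quantile (hard-type) thresholding occupying the boundary case ${\mathcal L}_\Theta=1$ — which is precisely why the guaranteed decrease constant drops from $2-{\mathcal L}_\Theta-\rho$ in Theorem \ref{conv_mat} to $1-\rho$ here and why the admissible range for $\rho$ shrinks to $\rho\le1$. Everything else is bookkeeping: the $(1-\rho)$ factor comes purely from a convex quadratic form, so no variational inequality over the nonconvex feasible set is ever required.
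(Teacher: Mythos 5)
Your proof is correct and follows essentially the same route as the paper: a linearized surrogate with a unit proximal term, global optimality of the quantile-thresholding step for the rank-constrained ridge subproblem (the paper's Lemma \ref{uniqsol-mat-constr}), and a Taylor expansion bounding the majorization gap by $\frac12\sum_k(\bsbb_k-\bsbb_k^{(j)})^T(\bsbI-\bsbmI(\bsbxi_k))(\bsbb_k-\bsbb_k^{(j)})$, exactly as in the proof of Theorem \ref{conv_mat}. The only cosmetic difference is that you establish the subproblem lemma by completing the square and invoking Eckart--Young, whereas the paper reduces it to minimizing $\frac{1+\eta}{2}\sum(\sigma_i^{(\bsbB)})^2-\langle\bsbY,\bsbB\rangle$ and applies von Neumann's trace inequality (Lemma \ref{vonNeu}).
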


See  Appendix \ref{approofconstrcov} for its proof. The preliminary scaling of  $\bsbX/k_0$ for any $k_0\geq \sqrt{\rho}$ guarantees the convergence. Still, PCV can be used for parameter tuning and model selection although the obtained estimate may not be globally optimal due to nonconvexity.

\paragraph*{Rank penalty vs. rank constraint}
We have developed algorithms \eqref{tisp-mat} and \eqref{tisp-mat-constr} for solving   \eqref{penform} (or \eqref{oriprob-glm}) and \eqref{conform} (or \eqref{oriprob-glm-constr}), respectively. The obtained estimates are usually local optimizers of  the corresponding objective functions. 
However, in non-Gaussian GLM setups, we found that the nonconvexity of either problem can be very strong. For instance, there may exist many local optima all having the same rank but spanning different subspaces in $\mathbb R^p$.
In consideration of this, the penalized solution path $\hat\bsbB(\lambda)$ ($0\leq \lambda <+\infty$) may provide  more candidate models of certain rank (if existing) than the constrained solution path $\hat\bsbB({r})$ ($r=1, 2,\cdots,   p \wedge m$), which is advantageous in   the stage of parameter tuning. This phenomenon is often observed in datasets where $p$ is comparable to or larger than $n$. Note that typically the direct rank penalized $\hat\bsbB(\lambda)$   has no rank monotonicity.


On the other hand, computing the solution path for the penalty form  is often more time-consuming in  large-$p$ applications.
The path $\hat\bsbB(\lambda)$  has jumps.
Assuming no  prior knowledge of the appropriate interval for $\lambda$, one has to specify a large search grid  fine enough  to cover a reasonable number of candidate models.
By contrast, for the problem of constraint form,  we can set a small upper bound for ${r}$  in pursuing a low rank model (say, ${r} \leq 0.5 n \wedge p \wedge m$ could be good),  and the natural grid spacing is $1$. With the grid focusing on small values of ${r}$ (which amounts to applying large thresholds in the iteration steps), Algorithm \eqref{tisp-mat-constr} runs efficiently.

%

\paragraph*{Feature space reduction}
To combine the virtues of both approaches, we propose to solve the rank constrained problem to perform feature space reduction, and then run  the rank penalized algorithm in the reduced feature space. This is very helpful in large-$p$ applications.
A crude sketch is as follows. First, we set ${r}=\alpha n \wedge p \wedge m$ with $\alpha<1$ (e.g., $\alpha=0.5$) and solve \eqref{conform}. Using the estimate $\hat\bsbB(r)$, we execute Type-I  feature  extraction \eqref{type-I} to construct a new model matrix $\bsbZ=\bsbX  \bsbU_1(r)$ with only $r$ factors (in addition to the intercept).
Next, we turn to the  penalized problem \eqref{oriprob-glm} on  $(\bsbY, \bsbZ)$:  Get  the solution path from running  Algorithm \eqref{tisp-mat},  and tune the regularization parameters to find the optimal estimate,  denoted by $\hat \bsbB'(\lambda_o)$. Our final coefficient matrix estimate  is given by  $\bsbU_1(r) \hat \bsbB'(\lambda_o)$. A small number of new predictive features can be constructed (and decorrelated) based on \eqref{type-II}.

According to this scheme, the sample size of the reduced problem on $\bsbZ$   is    large relative to the reduced dimension.
It is not difficult to show that for $n>p$, the update in \eqref{tisp-mat} is essentially a contraction, and so Algorithm \eqref{tisp-mat} converges fast.

A crucial assumption here is that the rank of the true model, denoted by $r^*$, is very small, compared with   the sample size $n$.
This makes it possible to choose a safe rank constraint value ${r}$ in \eqref{conform}, which,  though possibly much less than $p$, is still much larger than the true $r^*$. Hence the computational cost of obtaining a solution path according to  Algorithm \eqref{tisp-mat} can be effectively reduced with little performance loss.
This idea shares  similarity with the variable screening \cite{fanlv} proposed in the context of sparse variable selection. In the process of screening,  all relevant variables should be kept, while in feature space reduction, only the necessary factors, being  linear combinations of the original predictors and  typically as few as a handful, are required to lie in  reduced feature space we project $\bsbX$ onto.

In implementation, we further adopt a  path-following (annealing) idea  to reduce computational load and avoid greedy reduction.
Define a cooling schedule $r(t)$ ($0\leq t \leq T$) with $r(0)=p$ and $r(T)=r$, where $r$ is an upper bound of the target rank.
We conduct \emph{progressive} feature space reduction as follows. (As aforementioned,  $\bsbZ^{\circ}$ refers to $\bsbZ$ without the first column, $\bsbB^{\circ}$ refers to $\bsbB$ without the first row, and $\bsbU^{\circ}$ refers to $\bsbU$  without the first row and the first column.)
\begin{enumerate}
\item Let $t\leftarrow 0$, $\bsbZ \leftarrow \bsbX$, $\bsbU \leftarrow \bsbI$.
\item Iterate until $r(t)\leq r$:
\begin{enumerate}
\item Set the rank constraint value to be $r(t)$ and perform the update \eqref{tisp-mat-constr} on $(\bsbY, \bsbZ)$ for at most $M$ times (with $M$ pre-specified);
\item Obtain the left singular vectors of the current slope estimate $\bsbB^{\circ}$, denoted by $\bsbU_1(r(t))$;
\item Let $\bsbZ^{\circ} \leftarrow \bsbZ^{\circ} \bsbU_1(r(t))$, $\bsbU^{\circ} \leftarrow \bsbU^{\circ} \bsbU_1(r(t))$;
\item $t\leftarrow t+1$.
\end{enumerate}
\end{enumerate}
At the end,  $\bsbZ$ is delivered as the new design, and the orthogonal matrix $\bsbU$ gives  the accumulated transformation matrix.

The previously described prototype
reduction scheme corresponds to   $r(t)=r$ for any $t$.  With an annealing algorithm design,   the dimensionality of the feature space keeps dropping; the  $\bsbB$ involved  in \eqref{tisp-mat-constr} has only $r(t)$ columns. A slow cooling schedule with a small number of $M$ is recommended. Based on our experiments, it is not too greedy and is usually computationally affordable for  large-$p$ problems.

\section{Data Examples}

We use two  real data examples to illustrate the proposed methodology for dimension reduction and supervised feature extraction.

\paragraph{Example 1.}
First,  we make a practical comparison of the rank penalized estimators from solving \eqref{penform} and the rank constrained estimators from solving \eqref{conform} by use of a {zipcode} dataset. The whole dataset (available at the website of  \citeasnoun**{ESL})  contains normalized handwritten digits in $16 \times 16$ grayscale images.
The  digits were originally scanned from envelopes by the U.S. Postal Service and have been deslanted and size normalized. The space of pixel predictors  is of dimension $256$. We standardized all such predictors. The intercept term is included in the model and is always unpenalized.
We introduced $m=9$ indicator response variables for digits 0-8, using 9 as the reference class.

The training set is large in comparison with $p$ and $m$ ($7291$ images). We chose a subset of $n=300$ at random in this experiment to compare the penalized solution path and the constrained solution path. No additional Frobenius-norm penalty was enforced. The prediction results of the estimates are shown in Table \ref{tab:miscls-zip}, evaluated on  2007 test observations.

\begin{table}[ht]
\centering
\setlength{\tabcolsep}{1.5mm}
\caption{\small{Rank constraint vs. rank penalty. Misclassification rates of the constrained and penalized reduced rank logistic regressions (RRL$^{(c)}$ and RRL$^{(p)}$) are shown for  the   {\tt zipcode} (sub)dataset where $p=257, n=300$.   The rank $r$ controls the \# of newly constructed features.  }
}\label{tab:miscls-zip}

\small{
\begin{tabular}{c ccc c }
\hline

\hline
$r$ & $1$ & $2$ & $3$ & $4$  \\
\hline

%
%

RRL$^{(c)}$ &  66.52\%   & 55.06\% & 38.47\% & 33.83\%  \\
RRL$^{(p)}$ &   66.52\%  & 55.06\% & {38.32\%}, {38.37\%} & 33.83\% \\

\hline
RRL$^{(c)}$+SVM &  59.24\%     & 47.48\% & 33.58\% & 30.79\% \\
RRL$^{(p)}$+SVM &   59.24\%     & 47.48\% & 33.63\%,  {33.58\%} & 30.79\% \\

\hline\hline

$5$ & $6$ &$7$ & $8$ & $9$  \\
\hline
24.86\%  & {22.27\%} & 21.33\% & 21.33\% & {20.43\%} \\
{24.81\%}, 24.86\% & 22.42\% & 21.47\%, 21.33\% & 21.33\% & 20.43\%, {19.13\%} \\

\hline
23.02\%  &  {20.38\%} & 20.43\% & 20.28\% & {18.53\%} \\
23.02\% &  21.08\% & 20.33\%, 20.43\% & {20.33\%}, 20.28\% & 18.53\%, {15.84\%} \\

\hline

\hline
\end{tabular}
}
\end{table}	

From the table, at certain values of $r$,  the rank penalty offered more candidate models along its solution path  than the rank constraint. Note that these rank-$r$ estimators may behave differently in prediction and feature extraction. For $p\sim n$ or $p>n$, this phenomenon is commonly seen.
With an appropriate parameter tuning strategy, the penalty form gives   better chances to achieve a low error rate.


Of course, this comes with a price in computation. In our experiment, the time for obtaining the RRL$^{(c)}$ path was less than one minute, while computing the RRL$^{(p)}$ path, with a $50$-point grid for $\lambda$, took about four minutes.

There is no obligation  to predict through the obtained estimator; perhaps more useful is the much lower dimensional feature space  yielded from such an estimator.
Fancier classifiers such as SVM can be applied with the new features automatically extracted and decorrelated via \eqref{type-II}, and result in lower error rates as shown in the table.

Finally, we add a comment that in some situations  there may exist  no penalized solution at certain  rank values. Yet with a large $\lambda$-grid chosen, the performance of the penalized estimator (after parameter tuning) does not seem to be worse than that of the constrained estimator.

\paragraph*{Example 2.} The \emph{Computer Audition Lab 500} ({\tt CAL500}) dataset is collected by \citeasnoun**{Turnbull_SemanticAudio} and involves $502$ Western popular songs by different artists selected from the past 50 years. Digital audio files were played to students to annotate these songs with $m=174$ words representing emotion, genre, instrument, vocals, etc. The concepts  characterized by the words are not   mutually exclusive and one song can be annotated with multiple labels. This is called  \emph{multi-label} data in machine learning.
The predictors are {MFCC-Delta}  audio features from  analyzing a short-time segment of the audio signal.  \citeasnoun**{Turnbull_SemanticAudio} used $68$ such feature vectors.
To allow for interactions between these audio features and to make   a more challenging problem, we consider a full quadratic model including all main effects, quadratic effects, and pairwise interactions. Hence $p=68+68(69)/2+1=2415$. We  split the data into two halves and used $n=251$ songs for training and the other $251$ for testing.

For this small-sample-size-high-dimensional problem, the SVM using all  ${2415}$ predictors gave a total misclassification rate of $21.2\%$, which is not all bad.  On the other hand,  the proposed reduced rank methodology can be applied to automatically construct new predictive audio features, possibly much fewer than $2415$.  The supervised nature is important because only the audio features helpful in annotation (classification) are  truly meaningful in this learning task.

First, we conducted the progressive feature space reduction  introduced in Section \ref{sec:conGLM}, with the upper bound of the target rank set to be $20$. Then we ran Algorithm  \eqref{tisp-mat} to fit a penalized reduced rank vector logistic regression with the $20$ extracted features. The rank-Frobenius penalty was chosen due to serious collinearity arising from the high-dimensional quadratic model. The parameters were tuned by $5$-fold PCV  with BIC correction.

Surprisingly, our final estimate $\hat\bsbB$ has $rank(\hat\bsbB{^\circ})=2$, which gives a dramatic  dimension reduction from $2514$ to $2$.  But the SVM trained based on just the two new features   yielded   an improved error rate of $14.13\%$. In fact, even using the vanilla reduced rank estimator, we can achieve an error rate of  $14.36\%$. The   per-word precision and recall (cf.   \citeasnoun**{Turnbull_SemanticAudio} for the detailed definitions) are, respectively, $35.6\%$  and $8.7\%$ on the test dataset,  comparable  to the rates of the  two advocated approaches in \citeasnoun**{Turnbull_SemanticAudio}.
But our model is more parsimonious and creates two concise audio summary indexes for semantic annotation.


\section{Conclusion}
Supervised linear feature extraction can be obtained from a reduced rank vector model.
We studied  rank penalized and rank constrained generalized linear models and discussed how to adapt them to feature extraction and feature space reduction. The latter technique helps to reduce  the computational cost significantly in high dimensions.
We also noticed the strong nonconvexity of such problems  raises some serious  issues in  data-resampling based parameter tunings, but the proposed projective cross-validation works decently in general and is efficient.
Through reduced rank modeling, dimension reduction can be attained if the rank of the model is small relative to the number of predictors.
The work can be viewed as a supervised and parametric generalization of  the principle component analysis.

\appendix
\section{Proof of Proposition \ref{uniqsol-mat}}
\label{approofMatApp}
To prove Proposition \ref{uniqsol-mat}, we first introduce  two lemmas.

\begin{lemma}[\citeasnoun{vonNeumann}]
\label{vonNeu}
Let $\bsbA, \bsbB$ be two $n\times n$ matrices. Then
\begin{eqnarray}
|Tr(\bsbA\bsbB)| \leq \sum \sigma_i(\bsbA) \sigma_i(\bsbB),
\end{eqnarray}
where $\sigma_1(\bsbA) \geq \sigma_2(\bsbA)\geq \cdots \geq \sigma_n(\bsbA)$ and  $\sigma_1(\bsbB) \geq \sigma_2(\bsbB)\geq \cdots\geq \sigma_n(\bsbB)$ are ordered singular values of $\bsbA$ and $\bsbB$ respectively.
\end{lemma}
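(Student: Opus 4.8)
The plan is to reduce the inequality, via the singular value decompositions of $\bsbA$ and $\bsbB$, to a clean extremal problem over orthogonal matrices, and then to settle that problem with the Birkhoff--von Neumann theorem together with the rearrangement inequality. First I would write $\bsbA = \bsbU_1 \bsbD_1 \bsbV_1^T$ and $\bsbB = \bsbU_2 \bsbD_2 \bsbV_2^T$ with $\bsbD_1 = \mbox{diag}\{\sigma_i(\bsbA)\}$, $\bsbD_2 = \mbox{diag}\{\sigma_i(\bsbB)\}$ and $\bsbU_1,\bsbV_1,\bsbU_2,\bsbV_2$ orthogonal. By cyclicity of the trace, $Tr(\bsbA\bsbB) = Tr(\bsbD_1 \bsbW \bsbD_2 \bsbZ)$ where $\bsbW := \bsbV_1^T \bsbU_2$ and $\bsbZ := \bsbV_2^T \bsbU_1$ are again orthogonal. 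So it suffices to bound $|Tr(\bsbD_1 \bsbW \bsbD_2 \bsbZ)|$ uniformly over all orthogonal $\bsbW,\bsbZ$.

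Expanding the trace entrywise gives $Tr(\bsbD_1\bsbW\bsbD_2\bsbZ) = \sum_{i,j} \sigma_i(\bsbA)\sigma_j(\bsbB)\, W_{ij} Z_{ji}$, hence by the triangle inequality $|Tr(\bsbA\bsbB)| \le \sum_{i,j} \sigma_i(\bsbA)\sigma_j(\bsbB)\, M_{ij}$ with $M_{ij} := |W_{ij}||Z_{ji}|$. The structural point is that $M=[M_{ij}]$ is doubly substochastic: by Cauchy--Schwarz, $\sum_j M_{ij} \le (\sum_j W_{ij}^2)^{1/2}(\sum_j Z_{ji}^2)^{1/2} = 1$ since row $i$ of $\bsbW$ and column $i$ of $\bsbZ$ are unit vectors, and the same bound holds for the column sums. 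Consequently $M$ is dominated entrywise by a doubly stochastic matrix $\widetilde M$, and by Birkhoff's theorem $\widetilde M = \sum_k \lambda_k \Pi_k$ is a convex combination of permutation matrices ($\lambda_k\ge 0$, $\sum_k\lambda_k=1$).

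Finally, for any permutation $\pi$ the rearrangement inequality gives $\sum_i \sigma_i(\bsbA)\sigma_{\pi(i)}(\bsbB) \le \sum_i \sigma_i(\bsbA)\sigma_i(\bsbB)$, because both singular value sequences are listed in decreasing order. Combining the pieces, and using nonnegativity of the $\sigma_i$ together with $\widetilde M \ge M$ entrywise, $|Tr(\bsbA\bsbB)| \le \sum_{i,j}\sigma_i(\bsbA)\sigma_j(\bsbB)\widetilde M_{ij} = \sum_k \lambda_k \sum_i \sigma_i(\bsbA)\sigma_{\pi_k(i)}(\bsbB) \le \sum_i \sigma_i(\bsbA)\sigma_i(\bsbB)$, which is exactly the claimed bound.

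I expect the only nonroutine step to be the passage from ``doubly substochastic $M$'' to ``convex combination of permutation matrices,'' i.e., isolating the substochastic-domination lemma and invoking Birkhoff cleanly; the SVD reduction and the rearrangement step are bookkeeping. An alternative route, avoiding Birkhoff altogether, is to maximize the smooth function $(\bsbW,\bsbZ)\mapsto Tr(\bsbD_1\bsbW\bsbD_2\bsbZ)$ over the compact set of pairs of orthogonal matrices and exploit first-order stationarity (perturb by one-parameter subgroups $e^{tK}$ with $K$ skew-symmetric) to show the maximum is attained with $\bsbW=\bsbZ^T$ diagonal with $\pm1$ entries, which again collapses to the rearrangement inequality; I would mention this as a remark but carry out the Birkhoff argument as the main proof.
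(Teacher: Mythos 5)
Your argument is correct, but note that the paper does not actually prove this lemma: it is quoted as von Neumann's trace inequality and the text simply refers the reader to Grigorieff's note for an elementary proof, so there is no in-paper derivation to match against. Your route---reduce via the two SVDs and cyclicity of the trace to bounding $|Tr(\bsbD_1\bsbW\bsbD_2\bsbZ)|$ over orthogonal $\bsbW,\bsbZ$, observe that $M_{ij}=|W_{ij}||Z_{ji}|$ is doubly substochastic by Cauchy--Schwarz, dominate it by a doubly stochastic matrix, and finish with Birkhoff plus the rearrangement inequality---is a standard and complete proof of the inequality (and is in the same spirit as the elementary proofs in the literature the paper points to). The one step you state without proof, that every doubly substochastic nonnegative matrix is dominated entrywise by a doubly stochastic one, is a genuine auxiliary lemma, but it follows from an easy greedy argument (while the total mass is less than $n$ there is a deficient row and a deficient column; increase that entry by the smaller deficiency and repeat), and you correctly flag it as the only nonroutine ingredient; the nonnegativity of the singular values is indeed what lets you pass from $M$ to $\widetilde M$ in the final bound. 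Your remark about the alternative variational argument over the orthogonal group is also sound and is closer to von Neumann's original treatment. In short: the proposal is a valid self-contained proof of a result the paper takes as external input.
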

We refer to \citeasnoun{noteGrig} for an elementary proof.
\begin{lemma}
\label{uniqsol-gen}
Given a thresholding rule $\Theta$, let $P$ be any penalty  satisfying  condition \eqref{constrP} in Proposition \ref{uniqsol-mat}.  Then, the univariate minimization problem $\min_\theta  (t-\theta)^2/2 + P(\theta;\lambda)$ has a unique optimal solution $\hat\theta=\Theta(t;\lambda)$ for every $t$ at which $\Theta(\cdot;\lambda)$ is continuous.
\end{lemma}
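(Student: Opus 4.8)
The plan is to turn the lemma into a one-dimensional calculus computation by substituting the integral representation \eqref{constrP} of the penalty. Write $f(\theta)=(t-\theta)^2/2+P(\theta;\lambda)$, set $\theta^\star=\Theta(t;\lambda)$, and recall from Remark (iii) that $s(u):=\Theta^{-1}(u;\lambda)=\sup\{v:\Theta(v;\lambda)\le u\}$ is nondecreasing on $(0,\infty)$. Since $P(\cdot;\lambda)$ only ever acts on nonnegative singular values it may be taken even, and by the oddness of $\Theta$ it suffices to treat $t\ge 0$; moreover for $t\ge 0$ one has $(t-\theta)^2\ge(t-|\theta|)^2$, so the minimizer can be sought among $\theta\ge 0$. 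Fix $t\ge 0$ and restrict attention to $\theta\ge 0$.

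The first main step is to compute $f(\theta)-f(\theta^\star)$. Using $\tfrac{\rd}{\rd v}\bigl((t-v)^2/2\bigr)=v-t$ together with \eqref{constrP}, the quadratic difference is $\int_{\theta^\star}^{\theta}(v-t)\rd v$ and the penalty difference is $\int_{\theta^\star}^{\theta}(s(v)-v)\rd v+q(\theta;\lambda)-q(\theta^\star;\lambda)$. Adding these, and using $q\ge 0$ together with $q(\theta^\star;\lambda)=q(\Theta(t;\lambda);\lambda)=0$, gives
\[
f(\theta)-f(\theta^\star)\ \ge\ \int_{\theta^\star}^{\theta}\bigl(s(v)-t\bigr)\rd v\ =:\ h(\theta).
\]
The sign of $h$ is then pinned down by monotonicity of $\Theta$: for $v>\theta^\star$ we have $\Theta(t;\lambda)=\theta^\star\le v$, so $t\in\{u:\Theta(u;\lambda)\le v\}$ and hence $s(v)\ge t$; for $v\in[0,\theta^\star)$, any $u>t$ would force $\Theta(u;\lambda)\ge\Theta(t;\lambda)=\theta^\star>v$, so $s(v)\le t$. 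In either case the integrand of $h$ carries the sign of $\theta-\theta^\star$ over the interval of integration, so $h(\theta)\ge 0$, and therefore $\theta^\star=\Theta(t;\lambda)$ is a global minimizer.

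The step I expect to be the real obstacle is strictness, i.e. uniqueness, and this is precisely where the hypothesis that $\Theta(\cdot;\lambda)$ is continuous at $t$ is used. Argue by contradiction. If $h(\theta)=0$ for some $\theta>\theta^\star$, then since the integrand is nonnegative, $s\equiv t$ a.e.\ on $(\theta^\star,\theta)$, and by monotonicity $s\equiv t$ throughout that open interval; taking the supremum over $v\in(\theta^\star,\theta)$ in "$\Theta(u;\lambda)>v$ for all $u>t$" yields $\Theta(u;\lambda)\ge\theta$ for every $u>t$, contradicting $\Theta(u;\lambda)\to\Theta(t;\lambda)=\theta^\star<\theta$ as $u\downarrow t$. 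Symmetrically, if $h(\theta)=0$ for some $\theta\in[0,\theta^\star)$, then $s\equiv t$ on $(\theta,\theta^\star)$, so for $v$ near $\theta^\star$ there are $u_n\uparrow t$ with $\Theta(u_n;\lambda)\le v<\theta^\star$, contradicting $\Theta(u_n;\lambda)\to\Theta(t;\lambda)=\theta^\star$. Hence $h(\theta)>0$ for every $\theta\neq\theta^\star$ in $[0,\infty)$, and undoing the even/odd reductions gives uniqueness of the minimizer $\Theta(t;\lambda)$ for all $t$ at which $\Theta(\cdot;\lambda)$ is continuous. The two delicacies to handle with care are the even/odd reductions and the passage from "$h(\theta)=0$" to "$s$ is identically $t$ on a subinterval", which rests only on monotonicity of $s$; once continuity of $\Theta$ at $t$ is invoked the contradiction is immediate. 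The argument parallels the univariate analysis in \citeasnoun{SheTISP}.
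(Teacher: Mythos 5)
Your argument is correct, and it is essentially the paper's route: the paper's entire proof of this lemma is the one-line citation ``Apply Lemma 1 in \citeasnoun{SheGLMTISP},'' and your computation of $f(\theta)-f(\theta^\star)=\int_{\theta^\star}^{\theta}(s(v)-t)\rd v + q(\theta;\lambda)$ followed by the sign analysis of $s(v)-t$ on either side of $\theta^\star$ and the continuity-at-$t$ contradiction for strictness is precisely the content of that cited lemma. The only point worth making explicit is the one you already flag, namely that the even/odd reduction tacitly treats $q(\cdot;\lambda)$ (and hence $P$) as even, which is harmless since the paper only ever applies $P$ to nonnegative singular values.
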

\begin{proof}
Apply Lemma 1 in \citeasnoun{SheGLMTISP}.
\end{proof}

\noindent \textit{Proof of the optimality part of Proposition \ref{uniqsol-mat}.}
Let   $\bsbY\in {\mathbb R}^{n\times m}$ and assume $n\geq m$ without any loss of generality. Let $\bsbY=\bsbU_0 \bsbD_0 \bsbV_0^T$ and $\bsbB=\bsbU \bsbD \bsbV^T$ be the SVDs where  $\bsbD_0=\mbox{diag}(d_{0,i})$ and $\bsbD=\mbox{diag}(d_i)$ 
with  $d_{0,1}\geq d_{0,2}\geq \cdots \geq d_{0,m}$ and $d_{1}\geq d_{2}\geq \cdots\geq d_{m}$. Clearly,
\begin{eqnarray*}
\|\bsbY-\bsbB\|_F^2 &=& \|\bsbY\|_F^2 + \|\bsbB\|_F^2 -2 Tr(\bsbY^T \bsbB) \\
&=&  \|\bsbY\|_F^2 + \|\bsbB\|_F^2 -2 Tr([\bsbY \ \ \bsb{0}]^T [\bsbB \ \ \bsb{0}]),
\end{eqnarray*}
where $[\bsbB \ \ \bsb{0}]\in {\mathbb R}^{n \times n}$ and   $[\bsbY \ \ \bsb{0}]\in {\mathbb R}^{n \times n}$. 
It follows from Lemma \ref{vonNeu} that
$Tr(\bsbY^T \bsbB) \leq \sum d_{0,i} d_{i}$.
Hence
\begin{align}
F(\bsbB) &\geq  (\|\bsbD_0\|_F^2 + \|\bsbD \|_F^2 -2 Tr(\bsbD_0 \bsbD))/2 + \sum P(d_i; \lambda) \label{vonapplied}\\
&= \sum (d_{0,i}- d_i)^2/2 + \sum P(d_i ; \lambda).\notag
\end{align}
Now the problem reduces to
$$
\min_{d_i}  \sum (d_{0,i}- d_i)^2/2 + \sum P(d_i ; \lambda).
$$
The optimal solution $\hat \bsbB$ then follows from Lemma \ref{uniqsol-gen}.
\qed

The argument above only implies the singular values of $\hat\bsbB$ are unique (up to permutation).
Although one can possibly argue the uniqueness of $\hat\bsbB$ by studying
the condition under which equality is achieved in \eqref{vonapplied}, another formal proof of the uniqueness is deferred to  Appendix \ref{approofpert}.  

\section{Proof of Proposition \ref{unifuncopt-mat}}
\label{approofpert}

Let $\bsbB=\hat \bsbB + \bsbDelta$. Suppose $\bsbY=\bsbU_0 \bsbD_0 \bsbV_0^T$, $\hat \bsbB = \bsbU_0 \hat \bsbD \bsbV_0^T$,  and $\bsbB=\bsbU \bsbD \bsbV^T$ are the SVDs.
We have
\begin{eqnarray*}
&&\|\bsbY-\bsbB\|_F^2/2 - \|\bsbD_0 - \bsbD\|_F^2 /2 \\
& = & -Tr(\bsbB^T \bsbY) + Tr(\bsbD_0 \bsbD)\\
&=&-Tr(\bsbB^T (\bsbY-\hat\bsbB)) +  Tr((\bsbD_0 - \hat\bsbD) \bsbD) +   Tr(\bsbD \hat\bsbD)-Tr(\bsbB^T \hat\bsbB) \\
& = &  -Tr(\bsbV \bsbD \bsbU^T  \bsbU_0 (\bsbD_0 - \hat\bsbD)\bsbV_0^T) + Tr((\bsbD_0 - \hat\bsbD) \bsbD) +   Tr(\bsbD \hat\bsbD)-Tr(\bsbB^T \hat\bsbB)
\end{eqnarray*}
By Proposition \ref{uniqsol-mat} $\hat\bsbD \preceq \bsbD_0$, i.e., $\bsbD_0 - \hat\bsbD$ is positive semi-definite.
By augmenting $\bsbY-\bsbB$ and  $\bsbB$ and applying 
Lemma \ref{vonNeu}, we can prove
$$
Tr( \bsbD  (\bsbD_0 - \hat \bsbD)  \geq Tr(\bsbV \bsbD \bsbU^T \bsbU_0 (\bsbD_0 - \hat \bsbD) \bsbV_0^T ),
$$
from which it follows that
\begin{eqnarray*}
\|\bsbY-\bsbB\|_F^2/2 - \|\bsbD_0 - \bsbD\|_F^2 /2 &\geq &   Tr(\bsbD \hat\bsbD)-Tr(\bsbB^T \hat\bsbB) \\
&\geq&  C_1 (Tr(\bsbD\hat \bsbD) - Tr(\bsbB^T \hat \bsbB)).
\end{eqnarray*}
Now we have
\begin{eqnarray*}
Q(\bsbB)-Q(\hat \bsbB)&=&\|\bsbY-\bsbB\|_F^2/2 - \|\bsbY - \hat\bsbB\|_F^2/2 + \sum P_{\Theta}(d_{i}; \lambda) - \sum P_{\Theta}(\hat d_{i}; \lambda)\\
&\geq& \|\bsbD_0 - \bsbD\|_F^2 /2 - \|\bsbD_0 - \hat\bsbD\|_F^2/2 + \sum P_{\Theta}(d_{i}; \lambda) - \sum P_{\Theta}(\hat d_{i}; \lambda) \\
&& + C_1 Tr(\bsbD\hat \bsbD - \bsbB^T \hat \bsbB)\\
&=& \sum \left((d_{0,i} - d_i)^2/2 +   P_{\Theta}(d_{i}; \lambda)\right)- \left( (d_{0,i} - \hat d_i)^2/2 + P_{\Theta}(\hat d_{i}; \lambda)\right)\\
&& + C_1 Tr(\bsbD\hat \bsbD - \bsbB^T \hat \bsbB)\\
&\geq & C_1 \sum (d_i-\hat d_{i})^2/2 + C_1 Tr(\bsbD\hat \bsbD - \bsbB^T \hat \bsbB)\\
&=& C_1 (\|\bsbD-\hat\bsbD\|_F^2/2 + Tr(\bsbD\hat \bsbD) - Tr(\bsbB^T \hat \bsbB))\\
&=& C_1(\| \bsbD\|_F^2/2 + \|\hat\bsbD\|_F^2/2 - Tr(\bsbB^T \hat \bsbB))\\
&=& C_1 \|\bsbB - \hat \bsbB\|_F^2/2.
\end{eqnarray*}
The second inequality is due to Lemma 2 in \citeasnoun{SheGLMTISP}. \qed \\


\noindent \textit{Proof of the optimality part of Proposition \ref{uniqsol-mat}.}
From the comment in Appendix \ref{approofMatApp}, any optimal solution $\bsbB$ must have the same nonzero singular values (up to permutation) as $\hat\bsbB$, i.e., $d_i=\hat d_i$,  seen from the proof of Proposition \ref{uniqsol-mat}. A more careful examination of the proof of Proposition \ref{unifuncopt-mat}  shows $Q(\bsbB)-Q(\hat \bsbB) \geq Tr(\bsbD\hat \bsbD - \bsbB^T \hat \bsbB) = \|\bsbB - \hat \bsbB\|_F^2/2$. Therefore, the globally optimal solution $\hat\bsbB$ in Proposition \ref{uniqsol-mat} must be unique. \qed

\section{Proof of Theorem \ref{conv_mat}}
\label{approofpenalg}
The proof is similar to that of   Theorem 2.1 in \citeasnoun{SheGLMTISP}.
Define a surrogate function $G$ for any $\bsbA=[\bsba_1, \cdots, \bsba_m]=[\tilde \bsba_0, \tilde \bsba_1, \cdots, \tilde \bsba_{p}]^T$ and $\bsbB=[\bsbb_1, \cdots, \bsbb_m]\in {\mathbb R}^{(p+1)\times m}$
\begin{eqnarray}
G(\bsbB,\bsbA)&=& -\sum_{k=1}^m \sum_{i=1}^n L_{i,k}(\bsba_k) + \sum_{s=1}^{p\wedge m} P(\sigma_{s}^{(\bsbA^\circ)};\lambda)+ \frac{1}{2} \| \bsbA-\bsbB\|_F^2 \notag \\
&&- \sum_{k=1}^m \sum_{i=1}^n (b(\bsbx_{i}^T \bsba_k) - b(\bsbx_{i}^T\bsbb_k)) + \sum_{k=1}^m \sum_{i=1}^n \mu_{i,k} (\bsbx_{i}^T\bsba_k-\bsbx_{i}^T\bsbb_k),
\notag 
\end{eqnarray}
where  $\mu_{i,k}=g^{-1}(\bsbx_{i}^T\bsbb_k)=b'(\bsbx_{i}^T\bsbb_k)$. It can be shown that given $\bsbB$,  minimizing $G$ over $\bsbA$ is equivalent to
\begin{eqnarray*}
&\arg \min_\bsbA &\frac{1}{2} \left\|\bsbA - \left [\bsbB+\bsbX^T\bsbY-\bsbX^T\bsbmu(\bsbB)\right ]\right\|_F^2 +  \sum_{s=1}^{p\wedge m} P(\sigma_{s}^{(\bsbA^\circ)};\lambda). \label{optovergamma}
\end{eqnarray*}
By Proposition \ref{uniqsol-mat},
$\bsbB^{(j+1)}$ in \eqref{tisp-mat} can be characterized by $\arg \min_\bsbA G(\bsbB^{(j)}, \bsbA)$.
Furthermore, we have  for any $\bsbDelta \in {\mathbb R}^{(p+1)\times m}$
\begin{eqnarray}
G(\bsbB^{(j)}, \bsbB^{(j+1)}+\bsbDelta) - G(\bsbB^{(j)}, \bsbB^{(j+1)})\geq \frac{C_1}{2} \|\bsbDelta\|_F^2 
\label{gbound}
\end{eqnarray}
with $C_1= \max(0, 1-{\mathcal L}_{\Theta})$, by applying  Proposition \ref{unifuncopt-mat} and Lemma 1 in \citeasnoun{SheGLMTISP}, and noting that $q_s(\sigma^{(\bsbB^{\circ(j+1)})})=0$, for  $\bsbB^{\circ(j+1)}$ obtained by $\Theta^\sigma$-thresholding.

Next, Taylor series expansion  gives
\begin{eqnarray*}
&&F(\bsbB^{(j+1)})+\sum_k\frac{1}{2}(\bsbb_{k}^{(j+1)}-\bsbb_{k}^{(j)})^T(\bsbI- \bsbmI(\bsbxi_{k}^{(j)}))(\bsbb_{k}^{(j+1)}-\bsbb_{k}^{(j)})\\
& =  & G(\bsbB^{(j)},\bsbB^{(j+1)}) \leq G(\bsbB^{(j)},\bsbB^{(j)})-\sum_k\frac{C_1}{2}(\bsbb_{k}^{(j+1)}-\bsbb_{k}^{(j)})^T  (\bsbb_{k}^{(j+1)}-\bsbb_{k}^{(j)})\\
&=&F(\bsbB^{(j)})-\frac{C_1}{2}\| \bsbB^{(j+1)}-\bsbB^{(j)} \|_F^2.
\end{eqnarray*}
\eqref{asympreg_mat} can be obtained. In fact, this decreasing property  holds for any $\rho \leq  2-{\mathcal L}_\Theta$.

Let $\bsbB^{(j_l)}\rightarrow\bsbB^*$ as $l\rightarrow\infty$. Under the condition $\rho< 2-{\mathcal L}_\Theta$, $C$ is strictly positive  and
$
\| \bsbB^{(j_l+1)}-\bsbB^{(j_l)}\|_F^2/2 \leq (F(\bsbB^{(j_l)})- F(\bsbB^{(j_l+1)}))/C \leq (F(\bsbB^{(j_l)})- F(\bsbB^{(j_{k+1})}))/C\rightarrow 0.
$
That is, $\Theta^{\sigma}(\bsbB^{\circ(j_l)}+\bsbX^{\circ T} \bsbY -\bsbX^{\circ T} \bsbmu(\bsbB^{(j_l)});\lambda)-\bsbB^{\circ(j_l)}\rightarrow 0$. Therefore, $\bsbB^*$ is a solution to \eqref{theta-mat} due to the continuity assumption. 
\qed

\section{Proof of Proposition \ref{redrrequiv}}
\label{approofequiv}
Let $\bsbM=(\bsbX^T\bsbX)^{-1/2}\bsbX^T\bsbY$ and $r_0 = \mbox{rank}(\bsbM)$. Obviously, $r_0\leq p\wedge m$.
Note that $\bsbM^T\bsbM = \bsbY^T\bsbH\bsbY$. Assume $\bsbM = \bsbU  \bsbD \bsbV^T$ is the SVD of $\bsbM$ with $\bsbU\in{\mathbb R}^{p\times r_0}$, $\bsbV\in{\mathbb R}^{m \times r_0}$, and $\bsbD\in{\mathbb R}^{r_0\times r_0}$. Suppose  all (positive) diagonal entries of $\bsbD$ are arranged in decreasing order.
Let $\bsbA\triangleq \bsbX^T \bsbY - \bsbX^T \bsbX \hat\bsbB$. To prove $\hat\bsbB$ obeys the $\Theta^\sigma$-equation \eqref{theta-mat} for hard-thresholding, it suffices to show that (i) there exists a $p\times r_0$ orthogonal matrix $\bsbU_*$ satisfying $\bsbU_*^T \bsbU_*=\bsbI$  such that $\bsbU_*^T(\hat \bsbB\hat \bsbB^T)\bsbU_*$ and $\bsbU_*^T (\bsbA\bsbA^T) \bsbU_*$ are both diagonal; (ii) there exists an $m\times r_0$ orthogonal matrix $\bsbV_*$  such that $\bsbV_*^T(\hat \bsbB^T\hat \bsbB)\bsbV_*$ and $\bsbV_*^T (\bsbA^T\bsbA) \bsbV_*$ are both diagonal; (iii) $Tr(\hat\bsbB^T\bsbA)=0$; (iv) the singular values of $\bsbA$ are all bounded by $\lambda$.

Recall that $r=\max\{i: d_i\geq \lambda\}$ and $\bsbV_r=\bsbV[\ ,1\mbox{:}r]$. Introduce $\bsbV_{-r}=\bsbV[\ ,\mbox{(}r\mbox{+}1\mbox{):}r_0]$, formed by deleting the first $r$ columns in $\bsbV$. Then we have
\begin{align}
\bsbA &= \bsbX^T\bsbY - \bsbX^T \bsbX \hat\bsbB  =\bsbX^T\bsbY - \bsbX^T \bsbH \bsbY {\mathcal P}_ {\bsbV_r}\notag\\
&= \bsbX^T\bsbY (\bsbI- {\mathcal P}_ {\bsbV_r})= \bsbX^T\bsbY {\mathcal P}_ {\bsbV_{-r}} \notag\\
&=  \bsbX^T\bsbY   \bsbV_{-r}\bsbV_{-r}^T = (\bsbX^T\bsbX)^{1/2}\bsbM\bsbV_{-r}\bsbV_{-r}^T. \label{Aformula}
\end{align}
Obviously, $Tr(\hat\bsbB^T\bsbA)= 0$.  (iii) is true.
On the other hand, we can rewrite $\hat\bsbB$ as
\begin{align}
\hat \bsbB
&=(\bsbX^T\bsbX)^{-1/2} \bsbM \bsbV_r \bsbV_r^T = (\bsbX^T\bsbX)^{-1/2}\bsbM \bsbV \left[ \begin{array}{cc} \bsbI_{r\times r} & \\ & \bsb{0}_{(r_0-r)\times (r_0-r)}\end{array}\right] \bsbV^T \notag\\
& = (\bsbX^T\bsbX)^{-1/2} \bsbU \bsbD \left[ \begin{array}{cc} \bsbI_{r\times r} & \\ & \bsb{0}_{(r_0-r)\times (r_0-r)}\end{array}\right] \bsbV^T= (\bsbX^T\bsbX)^{-1/2} \Theta_H^{\sigma}(\bsbM; \lambda). \label{Bformula2}
\end{align}

Now we obtain
\begin{align}
\hat\bsbB^T\hat\bsbB &= \bsbV \left[ \begin{array}{cc} \bsbI_{r\times r} & \\ & \bsb{0}_{(r_0-r)\times (r_0-r)}\end{array}\right] \bsbD\bsbU^T (\bsbX^T \bsbX)^{-1}\bsbU \bsbD \left[ \begin{array}{cc} \bsbI_{r\times r} & \\ & \bsb{0}_{(r_0-r)\times (r_0-r)}\end{array}\right] \bsbV^T \label{BBformula}
\\
\bsbA^T\bsbA & = \bsbV \left[ \begin{array}{cc}\bsb{0}_{r\times r}  & \\ &\bsbI_{(r_0-r)\times (r_0-r)} \end{array}\right] \bsbD\bsbU^T (\bsbX^T \bsbX) \bsbU \bsbD \left[ \begin{array}{cc} \bsb{0}_{r\times r} & \\ & \bsbI_{(r_0-r)\times (r_0-r)} \end{array}\right] \bsbV^T. \label{AAformula}
\end{align}
(iv) is straightforward from \eqref{AAformula}:
\begin{align*}
\|\bsbA^T \bsbA\|_2 \leq \|  \bsbX \|_2^2 \cdot  \left\|\bsbU \bsbD \left[ \begin{array}{cc} \bsb{0}_{r\times r} & \\ & \bsbI_{(r_0-r)\times (r_0-r)} \end{array}\right] \bsbV^T\right\|_2^2 \leq 1\cdot d_{r+1}^2 \leq \lambda^2.
\end{align*}
\eqref{BBformula} + \eqref{AAformula} also implies (ii). In fact, introducing   $\bsbG = \bsbD\bsbU^T (\bsbX \bsbX)^{-1}\bsbU \bsbD$, $\bsbH = \bsbD\bsbU^T (\bsbX \bsbX) \bsbU \bsbD$,  $\bsbG_{11} = \bsbG[1\mbox{:}r,1\mbox{:}r]$, $\bsbH_{22} = \bsbH[\mbox{(}r\mbox{+}1\mbox{):}r_0,\mbox{(}r\mbox{+}1\mbox{):}r_0]$, and assuming the spectral decompositions of the two submatrices are given by $\bsbG_{11}=\bsbU_{11}^{G} \bsbD_{11}^{G}  (\bsbU_{11}^{G})^T$ and $\bsbH_{22}=\bsbU_{22}^{H}  \bsbD_{22}^{H}  (\bsbU_{22}^{H})^T$, respectively, then, $$\bsbV_*=\bsbV \left[ \begin{array}{cc} \bsbU_{11}^{G} & \\ & \bsbU_{22}^{H}\end{array}\right]$$ simultaneously diagonalizes $\hat\bsbB^T\hat\bsbB$ and $\bsbA^T\bsbA$ and satisfies $\bsbV_*^T  \bsbV_*=\bsbI$.

Finally, we construct  $\bsbU_*$ to prove (i). From \eqref{Bformula2} and \eqref{Aformula},
\begin{align}
\hat\bsbB\bsbV_* &= (\bsbX^T\bsbX)^{-1/2} \bsbU \bsbD \left[ \begin{array}{cc} \bsbI_{r\times r} & \\ & \bsb{0}_{(r_0-r)\times (r_0-r)}\end{array}\right] \left[ \begin{array}{cc} \bsbU_{11}^{G} & \\ & \bsbU_{22}^{H}\end{array}\right]\label{bv}\\
\bsbA\bsbV_* &= (\bsbX^T\bsbX)^{1/2} \bsbU \bsbD \left[ \begin{array}{cc} \bsb{0}_{r\times r} & \\ & \bsbI_{(r_0-r)\times (r_0-r)}\end{array}\right] \left[ \begin{array}{cc} \bsbU_{11}^{G} & \\ & \bsbU_{22}^{H}\end{array}\right]. \label{av}
\end{align}
Let $\tilde\bsbG=(\bsbX^T\bsbX)^{-1/2} \bsbU \bsbD$ and $\tilde\bsbH=(\bsbX^T\bsbX)^{1/2} \bsbU \bsbD$. Then $\tilde\bsbG^T\tilde\bsbG=\bsbG$, $\tilde\bsbH^T\tilde\bsbH=\bsbH$. By construction,  $\bsbU_{11}^{G}$ and $\bsbU_{22}^{H}$  must be the right-singular vectors of $\tilde \bsbG_{1}=\tilde\bsbG[\ ,1\mbox{:}r]$ and $\tilde \bsbH_{2}=\tilde\bsbH[\ ,\mbox{(}r\mbox{+}1\mbox{)}\mbox{:}r_0]$ respectively. Denoting by $\bsbU_1^{\tilde G}$ and $\bsbU_2^{\tilde H}$ their associated left-singular vectors respectively, 
we get
$$\bsbU_*=\left[ \begin{array}{cc}  \bsbU_1^{\tilde G} & \bsbU_2^{\tilde H}\end{array}\right]$$ which makes  both $\bsbU_*^T\hat\bsbB\hat\bsbB^T \bsbU_*$ and $\bsbU_*^T\bsbA\bsbA^T\bsbU_*$ diagonal. To prove $\bsbU_*$ is the desired matrix in  (i), it remains to show the orthogonality of $\bsbU_*$.
It follows from \eqref{bv} and \eqref{av} that
\begin{eqnarray*}
(\tilde \bsbG_1\bsbU_{11}^G)^T \tilde \bsbH_2 \bsbU_{22}^H &= & (\bsbU_{11}^G)^T  \left((\bsbX^T\bsbX)^{-1/2} \bsbU \bsbD \left[ \begin{array}{c} \bsbI_{r\times r} \\  \bsb{0}\end{array}\right]\right)^T  \cdot\\
&&   (\bsbX^T\bsbX)^{1/2} \bsbU \bsbD \left[ \begin{array}{c}\bsb{0} \\ \bsbI_{(r_0-r)\times (r_0-r)}  \end{array}\right]  \bsbU_{22}^H\\
&=&  (\bsbU_{11}^G)^T \left[ \begin{array}{cc} \bsbI_{r\times r} &  \bsb{0}\end{array}\right]  \bsbD \bsbU^T (\bsbX^T\bsbX)^{-1/2} (\bsbX^T\bsbX)^{1/2} \cdot\\
&&\bsbU \bsbD \left[ \begin{array}{c}\bsb{0} \\ \bsbI_{(r_0-r)\times (r_0-r)}  \end{array}\right] \bsbU_{22}^H\\
& = & (\bsbU_{11}^G)^T \left[ \begin{array}{cc} \bsbI_{r\times r} &  \bsb{0}\end{array}\right]  \bsbD \bsbU^T \bsbU \bsbD \left[ \begin{array}{c}\bsb{0} \\ \bsbI_{(r_0-r)\times (r_0-r)}  \end{array}\right] \bsbU_{22}^H\\
& = & (\bsbU_{11}^G)^T \left[ \begin{array}{cc} \bsbI_{r\times r} &  \bsb{0}\end{array}\right]  \bsbD^2 \left[ \begin{array}{c}\bsb{0} \\ \bsbI_{(r_0-r)\times (r_0-r)}  \end{array}\right] \bsbU_{22}^H\\
&=&\bsb{0}.
\end{eqnarray*}
Since $\bsbG_{11}$ and $\bsbH_{22}$ are positive definite (noting that $\bsbD\in{\mathbb R}^{r_0\times r_0}$ is nonsingular),  we further obtain $(\bsbU_1^{\tilde G})^T \bsbU_2^{\tilde H}=\bsb{0}$. Hence $\bsbU_*^T\bsbU_*=\bsbI$.
The proof is now complete. \qed

\section{Proof of Proposition \ref{rowspaceHR}}
\label{approofhrlocal}

From Theorem  \ref{conv_mat}, $\hat \bsbB$ satisfies
\begin{eqnarray}
\begin{cases}
\hat\bsbB^{\circ}  = \Theta_{HR}^{\sigma}(\hat\bsbB^{\circ} + \bsbX^{\circ T} \bsbY - \bsbX^{\circ T} \bsbmu(\hat\bsbB; \bsbX); \lambda, \eta)  \\
\bsb{0}  =(\bsbY - \bsbmu(\hat\bsbB; \bsbX))^T \tilde  \bsbx_0.
\end{cases}
\label{hr-eq}
\end{eqnarray}
Here, we write $\bsbmu(\hat\bsbB; \bsbX)$ to emphasize the dependence of the mean matrix on the design.
In this proof, we use the same submatrix notation as in Appendix \ref{approofequiv}.

Given the SVD $\bsbB^{\circ}=\bsbU \bsbD \bsbV^T$, by Definition \ref{def:thresholdmat}, there exist orthogonal matrices $\bar \bsbU$ and $\bar \bsbV$, as augmented versions of  $\bsbU$ and $\bsbV$, respectively, i.e., $\bsbU = \bar \bsbU[I, ]$ and $\bsbV = \bar \bsbV[I,]$ for some index set $I$, such that  $\hat\bsbB^{\circ} = \bar \bsbU \bsbSig \bar \bsbV^T$ and $\bsbX^{\circ T} \bsbY - \bsbX^{\circ T} \bsbmu(\hat\bsbB; \bsbX) = \bar \bsbU \bsbW \bar \bsbV^T$ are both the SVDs.  Clearly, $\bsbSig[I, I]=\bsbD$, $\bsbSig[I^c, I^c]=\bsb{0}$. Using the hard-ridge thresholding \eqref{hybridthfunc}, we rewrite the first equation in \eqref{hr-eq} as
\begin{align}
(1+\eta) \bsbB^\circ + \lambda (1+\eta) \bar \bsbU \bsbS  \bar \bsbV^T & = \bsbB^\circ + \bsbX^{\circ T} (\bsbY -  \bsbmu(\hat\bsbB; \bsbX)), \label{hr-eq2}
\end{align}
where $\bsbS$ is diagonal and satisfies $\bsbS[I, I]=\bsb{0}$ and $\bsbS[i, i] \leq 1$  for any $i\in I^c$.
Left-multiplying both sides of  \eqref{hr-eq2} by $\bsbU^T$ yields
\begin{align*}
\eta \bsbD \bsbV^T = \bsbU^T \bsbX^{\circ T} (\bsbY -  \bsbmu(\hat\bsbB; \bsbX)).
\end{align*}
On the other hand, from the construction of $\hat \bsbC$ and $\bsbZ$, it is easy to verify $\bsbx_i^T \hat \bsbB = \bsbz_i^T \hat \bsbC$, from which it follows that  $\mu(\hat\bsbB; \bsbX)=\mu(\hat\bsbC;\bsbZ)$. Therefore, $\hat\bsbC$ satisfies
\begin{align}
\begin{cases}
\eta \bsbC^\circ  &=   \bsbZ^{\circ T} (\bsbY -  \bsbmu(\hat\bsbC; \bsbZ))\\
\bsb{0} & =(\bsbY - \bsbmu(\hat\bsbC; \bsbZ))^T \tilde  \bsbz_0.
\end{cases} \label{hr-eq3}
\end{align}
Noticing that the optimization problem in \eqref{localridgeopt} is convex and \eqref{hr-eq3} gives its KKT equation, the conclusion follows. \qed

\section{Proof of Theorem \ref{constrconv_mat}}
\label{approofconstrcov}

\begin{lemma}
\label{uniqsol-mat-constr}
Given any $\bsbY\in {\mathbb R}^{n\times }$, $\hat \bsbB = \Theta^{\# \sigma}(\bsbY; {r}, \eta)$ is a globally optimal solution to
\begin{eqnarray}
\min_\bsbB  \frac{1}{2} \|\bsbY-\bsbB\|_F^2  + \frac{\eta}{2} \| \bsbB \|_F^2 \quad \mbox{s.t.} \quad rank(B)\leq {r} \label{oriprob-nodesign-constr}
\end{eqnarray}
\end{lemma}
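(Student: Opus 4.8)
The plan is to follow the proof of Proposition \ref{uniqsol-mat} almost verbatim, replacing the separable singular-value penalty there by the combination of a ridge (Frobenius) term and a hard rank constraint. First I would take the SVDs $\bsbY=\bsbU_0\bsbD_0\bsbV_0^T$ and $\bsbB=\bsbU\bsbD\bsbV^T$ with ordered singular values $d_{0,1}\ge d_{0,2}\ge\cdots$ and $d_1\ge d_2\ge\cdots$, and rewrite the objective as
\[
\tfrac12\|\bsbY-\bsbB\|_F^2+\tfrac{\eta}{2}\|\bsbB\|_F^2=\tfrac12\|\bsbY\|_F^2+\tfrac{1+\eta}{2}\|\bsbB\|_F^2-Tr(\bsbY^T\bsbB).
\]
Padding $\bsbY$ and $\bsbB$ with zero blocks to make them square (as in Appendix \ref{approofMatApp}, which changes neither the Frobenius norms nor the trace) and applying the von Neumann trace inequality (Lemma \ref{vonNeu}) gives $Tr(\bsbY^T\bsbB)\le\sum_i d_{0,i}d_i$, so the objective is bounded below by $\sum_i\big(\tfrac12 d_{0,i}^2-d_{0,i}d_i+\tfrac{1+\eta}{2}d_i^2\big)$, and this bound is attained by any $\bsbB$ of the form $\bsbU_0\,\mbox{diag}(d_i)\,\bsbV_0^T$, i.e. when the singular subspaces of $\bsbB$ are aligned with those of $\bsbY$.

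It then remains to minimize the scalar quantity $g(d_1,d_2,\dots)=\sum_i\big(\tfrac12 d_{0,i}^2-d_{0,i}d_i+\tfrac{1+\eta}{2}d_i^2\big)$ over nonnegative $d_i$ with at most $r$ of them nonzero. For a fixed support $S$ with $|S|\le r$, the minimizer over $\{d_i:i\in S\}$ is $d_i=d_{0,i}/(1+\eta)$, contributing $\tfrac{\eta}{2(1+\eta)}d_{0,i}^2$, while each $i\notin S$ is forced to $d_i=0$ and contributes $\tfrac12 d_{0,i}^2$. Since $\eta\ge0$ forces $\tfrac{\eta}{1+\eta}\le1$, it is optimal to take $|S|=r$ and to fill $S$ with the indices carrying the largest $d_{0,i}$, so $S=\{1,\dots,r\}$ (with ties broken arbitrarily, matching the definition of $\Theta^{\#}$). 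Reassembling, a global minimizer is $\bsbU_0\,\mbox{diag}\{d_{0,i}1_{i\le r}/(1+\eta)\}\,\bsbV_0^T$, which is precisely $\hat\bsbB=\Theta^{\#\sigma}(\bsbY;r,\eta)$ by the definition of the matrix quantile thresholding rule; since this $\hat\bsbB$ has rank at most $r$ and shares singular vectors with $\bsbY$, it makes the von Neumann inequality an equality and hence achieves the lower bound, establishing global optimality.

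I expect the only step requiring any thought to be the combinatorial choice of the support $S$, i.e. checking that replacing a smaller $d_{0,i}$ in $S$ by a larger one never increases $g$; this reduces to the one-line comparison $\tfrac{\eta}{1+\eta}\le1$. Two further remarks: Lemma \ref{vonNeu} is stated for square matrices, so the zero-padding step must be carried out explicitly (assuming $n\ge m$ without loss of generality) as in Appendix \ref{approofMatApp}; and only global optimality, not uniqueness, is claimed, which is appropriate because ties among the $d_{0,i}$ make the tie-breaking rule (and hence $\Theta^{\#\sigma}$ itself) nonunique, so no uniqueness argument should be attempted. Everything else is a direct transcription of the Proposition \ref{uniqsol-mat} argument with $P(\cdot;\lambda)$ replaced by $\tfrac{\eta}{2}(\cdot)^2$ together with the cardinality constraint on the singular values.
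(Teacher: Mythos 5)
Your proof is correct and follows the same route as the paper: the paper's own (two-line) argument likewise completes the square to $\tfrac{1+\eta}{2}\sum(\sigma_i^{(\bsbB)})^2-\langle\bsbY,\bsbB\rangle$ and invokes the von Neumann trace inequality (Lemma \ref{vonNeu}); you have simply written out the scalar minimization and the support-selection step that the paper leaves implicit.
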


\begin{proof}
The problem is equivalent to minimizing $$\frac{1+\eta}{2} \sum (\sigma_i^{(\bsbB)})^2-<\bsbY, \bsbB>$$ subject to $rank(\bsbB)\leq {r}$. Applying Lemma \ref{vonNeu} yields the result.
\end{proof}
The remainder of the proof  follows the same lines to the proof  of Theorem \ref{conv_mat}. See Appendix \ref{approofpenalg} for details.

\bibliographystyle{ECA_jasa}
\bibliography{tisp_matrix}

\end{document}